\definecolor{customcolor}{HTML}{2a45f7} 
\newtheorem{assumption}{Assumption}
\newtheorem{definition}{Definition}
\newtheorem{lemma}{Lemma}
\newtheorem{theorem}{Theorem}
\newtheorem{property}{Property}
\newenvironment{implication}[1][Key Implication]{%
    \noindent%
    \normalfont\textit{#1. }%
}{%
    \endtrivlist\medskip
}
\begin{document}

\title{PRISM: Complete Online Decentralized Multi-Agent Pathfinding with Rapid Information Sharing \\  using Motion Constraints}

\author{\name Hannah Lee \email hannah9@illinois.edu \\
       \addr University of Illinois at Urbana-Champaign, 201 N Goodwin Ave,  \\Urbana, IL 61801 USA\\
       \AND
       \name Zachary Serlin \email zachary.serlin@ll.mit.edu \\
       \addr MIT Lincoln Laboratory, 244 Wood St, \\ Lexington, MA 02421 USA \\
       \AND
       \name James Motes \email jmotes2@illinois.edu \\
       \addr University of Illinois at Urbana-Champaign, 201 N Goodwin Ave,  \\Urbana, IL 61801 USA\\
       \AND
       \name Brendan Long \email brendan.long@ll.mit.edu\\
       \addr MIT Lincoln Laboratory, 244 Wood St, \\ Lexington, MA 02421 USA \\       
       \AND
       \name Marco Morales\email moralesa@illinois.edu \\
       \name Nancy M. Amato\email namato@illinois.edu \\
       \addr University of Illinois at Urbana-Champaign, 201 N Goodwin Ave,  \\Urbana, IL 61801 USA\\
       }


\maketitle

\begin{abstract}
We introduce PRISM (Pathfinding with Rapid Information Sharing using Motion Constraints), a novel decentralized algorithm designed to address the multi-task multi-agent pathfinding (MT-MAPF) challenge. PRISM enables large teams of agents to concurrently plan safe and efficient paths for multiple tasks while avoiding collisions. It employs a rapid communication strategy that uses information packets to exchange motion constraint information, enhancing cooperative pathfinding and situational awareness, even in scenarios without direct communication. We theoretically prove that PRISM resolves and avoids all deadlock scenarios when possible, a critical hurdle in decentralized pathfinding. Empirically, we evaluate PRISM across five environments and 25 random scenarios, benchmarking it against the centralized Conflict-Based Search (CBS) and the decentralized Token Passing with Task Swaps (TPTS) algorithms. PRISM demonstrates exceptional scalability and solution quality, supporting 3.4 times more agents than CBS and handling up to 2.5 times more tasks in narrow passage environments than TPTS. Additionally, PRISM matches CBS in solution quality while achieving faster computation times, even under low-connectivity conditions. Its decentralized design significantly reduces the computational burden on individual agents, making it highly scalable for large-scale environments. These results confirm PRISM’s robustness, scalability, and effectiveness in addressing complex and dynamic pathfinding scenarios.
\end{abstract}

\section{Introduction} \label{Introduction}

Multi-agent pathfinding (MAPF) is an area of research concerned with the coordination of multiple autonomous agents as they navigate from individual starting points to designated goals. This problem is critical for robots to operate efficiently and safely in shared spaces. The main challenge is not just finding efficient paths for individual agents but coordinating these paths to prevent conflicts while optimizing criteria like total time or path length. This is relevant in applications such as assembly \shortcite{hlw-agffaptmsa-98,nb-toaraap-93,bpssk-ostapffmarap-20}, evacuation \cite{ra-bbep-10}, formation \shortcite{ba-bbfcfmt-98,tpk-ltfs-04,kh-ppfpimf-06,lsmfkk-maifice-20,lwcycl-mfifadhlatmamt-21}, localization \cite{fbkt-apptcmrl-00}, and object transportation\shortcite{bpssk-ostapffmarap-20,rdjj-mfwtofar-95}.

The complexity of MAPF stems from its high computational demands and the need for scalable solutions in real-world scenarios. As the number of agents increases, the space of potential interactions and conflicts grows exponentially, making traditional pathfinding algorithms like A* inadequate \cite{yl-saioomrppog-13}. These algorithms also struggle with dynamic settings where obstacles and agent goals can change in real time. Moreover, many existing solutions rely on unrealistic assumptions, such as centralized coordination, perfect communication, or single-task assignments per agent, limiting their applicability in practical settings \cite{salzman2020research}. 

This complexity is further amplified when multiple objectives are introduced. Solvers must address not only optimal pathfinding but also efficient task sequencing and allocation, which can lead to a combinatorial explosion in solution space. These heightened computational demands underscore the limitations of centralized approaches and highlight the practicality of decentralized methods in environments requiring adaptability and real-time decision-making. 

In response to these challenges, decentralized solvers offer a promising alternative by enabling agents to compute their paths independently while dynamically resolving conflicts through local interactions. This decentralized autonomy reduces reliance on a central coordinator, making it particularly advantageous in environments with unreliable or constrained communication. Decentralized methods also exhibit enhanced adaptability, allowing agents to adjust their paths in response to environmental changes and evolving objectives in real time. Such flexibility enhances system resilience, especially in highly dynamic conditions or scenarios involving failures and malicious interference, where centralized approaches often fall short. 

Despite their promise, decentralized MAPF solvers face notable challenges. Achieving scalability often comes at the cost of completeness guarantees, leaving larger systems vulnerable to deadlock. Many solvers also operate under restrictive assumptions, such as unrestricted communication among agents, which are often unrealistic in large-scale or resource-constrained environments. Others attempt to mitigate these issues by relying on predefined task designs, sidestepping the need for higher-level coordination. These limitations underscore the need for a decentralized MAPF framework that combines scalability and high solution quality while adhering to realistic operation constraints, motivating the development of our proposed approach. 

In this paper, we address the multi-task multi-agent pathfinding (MT-MAPF) problem, which combines task allocation and pathfinding into a cohesive framework. Task allocation assigns objectives to agents based on factors like proximity and path length, forming the foundation for conflict-free pathfinding. Efficient task allocation depends on accurate path estimates, while successful pathfinding relies on well-distributed tasks. These components must operate in tandem, with each phase informing the other to maximize overall system performance. 

We propose PRISM (Pathfinding with Rapid Information Sharing using Motion constraints), a novel online decentralized MAPF algorithm designed to solve the MT-MAPF problem under constrained communication protocols, where only a subset of agents can communicate at any time. The key contributions of PRISM are as follows: 
\begin{enumerate}
    \item \textbf{Completeness} The algorithm provably resolves all solvable deadlock scenarios, guaranteeing task completion for all agents. 
    \item \textbf{Flexible online decentralized planner:} PRISM ensures scalability, robustness, and adaptability in dynamic environments where tasks and team specifications may evolve, all while maintaining high solution quality.
    \item \textbf{Constrained communication protocols:} PRISM effectively enables decentralized pathfinding by utilizing info  packets with motion constraints, facilitating safe and efficient coordination among agents.
    \item \textbf{Improved scalability and robustness:} Empirical evaluations show PRISM maintaining high solution quality with up to 30 agents and 575 tasks, outperforming centralized methods in efficiency while achieving comparable solution quality. Additionally, PRISM surpasses existing decentralized approaches in reliability, highlighting its robustness and adaptability to dynamic real-world applications. 
\end{enumerate}

\section{Problem Formulation} \label{ProblemFormulation}

In Multi-Task Multi-Agent Pathfinding (MT-MAPF), $n$ agents, denoted as $R = \{R_1, \dots, R_n\}$, navigate a shared environment to complete $m$ tasks, represented by $T = \{T_1^?, \dots, T_m^?\}$, with `?' indicating an unassigned task. These agents possess limited communication capabilities, restricted to those within range, yet they can extend their reach through multi-hop communication within their network. In a system of $|R|$ agents, a local network is assumed to consist of a subset of agents that can communicate with each other through multiple hops. Consequently, all agents within the same connected component are considered part of the same local network. 

Agents start with complete knowledge of the environment but lack information about other agents, including their objectives and locations. As agents move along their designated paths, they exchange information upon entering communication range, enabling collaborative adjustments to their paths to avoid conflicts.

Tasks are initially unassigned and represented as tuples $T_j^? = \langle s_j, g_j \rangle^?$, where the superscript $?$ denotes their unassigned state. Each task is defined by a starting position $s_j$ and a goal position $g_j$.  The task allocator assigns these tasks to agents, ensuring that each task is undertaken by an appropriate agent, such that $T_j^i$ denotes task $j$ has been assigned to agent $R_i$. We classify tasks as either mission tasks or transition tasks: mission tasks are elements of $T$, while transition tasks enable movement between mission tasks. 

We assume that the MT-MAPF problem consists of solvable mission tasks. This implies the existence of a task allocation where no two tasks share the same start or goal position, and the task allocator is capable of identifying such an allocation. As a result, no two agents will simultaneously attempt to access the same endpoint as a final resource. As a result, all goal positions serve as safe resting locations for agents, ensuring they do not permanently obstruct others from reaching their goals. 

The task allocator is responsible for assigning unstarted mission tasks to agents. Once an agent begins a mission task, it cannot be reassigned to another agent. However, if an agent is transitioning to a mission, the mission task can be reassigned to another agent if necessary. In cases where the original agent is not assigned a new mission task after the reassignment, it is expected to return to the goal position of its previous mission task. 

Agents operate within an undirected graph $G = (V, E) $ which models the environment as a two-dimensional grid world where movement is constrained to the four cardinal directions. Vertices $V$ represent feasible positions within the environment and edges $E$ enable transitions between adjacent vertices. Agents progress through the environment in discrete timesteps, choosing at each timestep to either move to a connected vertex or remain stationary. 

The objective of MT-MAPF is to develop a coordinated team plan  $\Pi = \{\pi_1, \dots, \pi_n\}$, where the plan for each agent, $\pi_i = [p_1, \dots, p_k]$, consists of concatenated paths designed to complete assigned mission tasks and transition tasks. Each path in the agent's plan is a sequence of positions $p_k = (v, \dots, v')$. The position of an agent at any given timestep $t$ is determined by $\pi_i[t]$. An agent's plan length is defined as $|\pi_i|$ which is the sum of all of its individual paths $\sum_{j = 1}^k|p_j|$. 

Agent paths must be planned to avoid conflicts. Vertex conflicts are denoted as  $\langle R_i, R_j, v, t \rangle$ and occur when agents $R_i$ and $R_j$ occupy the same vertex $v$ at the same time $t$. Edge conflicts, denoted as $\langle R_i, R_j, v, v', t, t' \rangle$, occur when agents $R_i$ and $R_j$ attempt to traverse the same edge $\langle v, v' \rangle$ between times $(t, t')$.  

MT-MAPF aims to produce a collision-free team plan that efficiently completes all tasks while minimizing the sum-of-costs objective. Sum-of-costs is defined as the total cost of all plans and this objective is expressed as follows: 
\begin{align*}
    Sum\ of\ Costs(\Pi) = \sum_{i = 1}^n |\pi_i|
\end{align*}

\section{Related Work} \label{Related Work}

Centralized approaches to MAPF rely on a single entity, with comprehensive knowledge of the environment and agents' states, that can compute optimal or near-optimal paths for all agents simultaneously, often resulting in efficient conflict resolution and high-quality solutions. However, these algorithms face significant scalability challenges as the number of agents increases, owing to the exponential growth of the state space and the computational complexity required to coordinate interactions across agents. Furthermore, centralized methods are typically offline and are less adaptable to dynamic environments where real-time decision-making is required.

In contrast, decentralized approaches distribute decision-making among agents, enabling each to plan its path using local information and limited communication. This decentralized structure enhances scalability and robustness, particularly in dynamic environments, as agents can quickly adapt to environmental changes and the behavior of other agents. However, the lack of centralized oversight poses challenges in achieving globally optimal solutions and ensuring effective coordination, especially in densely populated or highly constrained scenarios. One critical issue in decentralized MAPF is the potential for deadlocks, where agents become indefinitely stalled due to mutually conflicting paths or resource contention. Resolving or avoiding deadlocks, which often requires sophisticated coordination mechanisms, can be difficult to achieve without centralized control or under constrained communication conditions. Consequently, many decentralized algorithms operate under narrowly defined assumptions, which can limit their applicability and generalizability to broader or more complex settings.


Another fundamental distinction in MAPF lies between online and offline planning paradigms. Offline planning assumes that all relevant information about the environment and agents is available prior to computation, allowing for optimized, precomputed paths. While this approach can yield high-quality solutions, it is often impractical in dynamic or partially observable environments where unforeseen changes or disruptions occur. In contrast, online planning operates in real time, with agents continually updating their plans based on new information as it becomes available. Although online methods are well-suited for dynamic and uncertain scenarios, they face significant challenges in maintaining solution quality and avoiding conflicts due to limited time for computation and communication. These trade-offs between computational efficiency, adaptability, and solution quality are central to the development of effective decentralized MAPF algorithms.

To address these coordination challenges, the field offers a diverse range of approaches, including search-based solvers, partition-based solvers, priority-based solvers, consensus-based solvers, rule-based methods, and potential fields. Each method involves distinct trade-offs in scalability, solution quality, and adaptability. This section provides a concise overview of these approaches, emphasizing their respective strengths and limitations, and positions our contributions through PRISM within this context. Table \ref{table:related} summarizes the strengths and limitations of these methods, with tildes indicating areas where some research progress has been made but remains limited. Additionally, we introduce Conflict-Based Search (CBS) \shortcite{cbs}, a constraint-based algorithm that serves as a foundational framework for PRISM.

\begin{table*}[!t]
\centering
\caption{Strengths and Limitations of General Solvers}
{
\resizebox{\textwidth}{!}{%
\begin{tabular}{|l|c|c|c|c|c|c|c|}
\hline
\rowcolor[HTML]{EFEFEF} 
\multicolumn{1}{|c|}{\cellcolor[HTML]{EFEFEF}Feature} & PRISM & Search-Based & Partition-Based & Priority-Based & Rule-Based & Potential Field & Consensus-Based \\ \hline
Centralized & \textbf{} & \textbf{$\times$} & \textbf{} & \textbf{$\times$} & \textbf{} & \textbf{} & \textbf{} \\ \hline
Decentralized & \textbf{$\times$} & \textbf{} & \textbf{$\times$} & \textbf{$\times$} & \textbf{$\times$} & \textbf{$\times$} & \textbf{$\times$} \\ \hline
Offline Planning & \textbf{} & \textbf{$\times$} & \textbf{} & \textbf{$\times$} & \textbf{} & \textbf{} & \textbf{} \\ \hline
Online Planning & \textbf{$\times$} & \textbf{} & \textbf{$\times$} & \textbf{$\times$} & \textbf{$\times$} & \textbf{$\times$} & \textbf{$\times$} \\ \hline
Complete & \textbf{$\times$} & \textbf{$\times$} & \textbf{} & \textbf{} & \textbf{} & \textbf{} & \textbf{} \\ \hline
Resolves Deadlock & \textbf{$\times$} & \textbf{$\times$} & \textbf{$\sim$} & \textbf{$\sim$} & \textbf{} & \textbf{} & \textbf{$\sim$} \\ \hline
Constrained Comms & \textbf{$\times$} & \textbf{} & \textbf{$\times$} & \textbf{} & \textbf{$\times$} & \textbf{$\times$} & \textbf{$\times$} \\ \hline
Peer-to-Peer Comms & \textbf{} & \textbf{} & \textbf{} & \textbf{$\times$} & \textbf{} & \textbf{} & \textbf{} \\ \hline
\end{tabular}%
}
}
\label{table:related}
\end{table*}

\subsection{Search-Based Solvers} 

Search-based solvers form a foundational category of algorithms for centralized, offline MAPF. These methods systematically explore the state space to find solutions, often employing graph search techniques such as A*, its derivatives, or other combinatorial optimization strategies. By exhaustively analyzing all possible configurations of agent paths, search-based solvers are capable of providing high-quality solutions, frequently achieving optimal or near-optimal paths for agents \shortcite{cbs,hccbs,s-cp-05,pbs,wc-sefmrpp-15,cukk-oabsmamp-19,smsa-romrmpucbs-21,ma2016multi,lmlk-tappfmapd-19,bfsssbt-icbsfomapf-15,lhsfmk-dsfmapfwcbs-19,lhsmk-sbcfgbmapf-19,lghsmk-ntfpsbimapf-20}. A notable example is Conflict-Based Search (CBS), which uses a hierarchical strategy to decompose complex problems into smaller subproblems, efficiently resolving conflicts among agents.
\
Constraint-based search algorithms, such as \shortcite{cbs,hccbs,pbs,bfsssbt-icbsfomapf-15,lhsfmk-dsfmapfwcbs-19,lhsmk-sbcfgbmapf-19,lghsmk-ntfpsbimapf-20}, represent a powerful subcategory of search-based solvers, focusing on systematically managing conflicts among agents to ensure feasibility and efficiency. These methods operate by identifying conflicts, such as two agents attempting to occupy the same space at the same time, and resolving them through the addition of constraints that guide future search iterations. This iterative refinement enables constraint-based algorithms to balance solution quality and computational efficiency effectively. While highly capable of handling complex coordination problems, their reliance on centralized, offline computation limits their applicability in dynamic or large-scale environments where constraints and objectives frequently change.

Despite their advantages, these search-based solvers face significant scalability challenges as the number of agents increases, leading to exponential growth in the computational complexity required to coordinate interactions across agents. This issue is compounded in settings with large-scale agent teams or highly constrained environments. Additionally, offline, centralized solvers assume a priori knowledge of all tasks and environmental conditions, limiting their ability to adapt to dynamic changes or unexpected obstacles. These solvers typically do not address the distribution of computational workload, which becomes a bottleneck in scenarios involving a higher number of tasks than available agents.

\subsection{Partition-Based Solvers}
Partition-based solvers address scalability challenges in MAPF by dividing the environment into smaller regions and solving subproblems within each partition. These approaches are typically online, enhancing scalability and reducing computational overhead, which makes them well-suited for larger environments where exhaustive global coordination is impractical. Partition-based solvers avoid relying on complete peer-to-peer communication across the entire agent team, enabling more effective handling of large-scale scenarios. However, they often struggle with coordinating agents at partition boundaries, potentially leading to suboptimal solutions.

Partition-based solvers can be broadly categorized into those that employ controlling agents or coordinators for each partition and those that allocate partitions on a per-agent basis to prevent collisions. In the first category, methods such as \shortcite{wilt2014spatially,pianpak2019distributed}, assume the presence of a central coordinator agent within each partition, responsible for managing the activities and paths of all agents in its assigned region. Coordination between neighboring partitions is achieved by facilitating agent transfers across boundaries, typically managed through interactions between the respective coordinators. This approach simplifies intra-partition coordination but depends on the assignment of specific roles to agents, which may not always align with the capabilities or distribution of the team.

In contrast, partition-based solvers that operate on a per-agent basis, such as \shortcite{purwin2008theory,gui2023decentralized}, aim to prevent collisions by assigning each agent to a separate partition, minimizing direct agent-to-agent interaction. These algorithms often require conservative planning strategies to ensure safe operation, particularly when partitions overlap or agents approach boundary regions. While this method eliminates the need for explicit role assignments, its conservative nature can limit overall system efficiency and the quality of solutions.

\subsection{Priority-Based Solvers}
Priority-based solvers assign priorities to agents and plan their paths sequentially, offering a practical approach for MAPF by reducing the complexity of simultaneous planning. These methods are often decentralized, making them suitable for distributed systems, but can also be implemented in centralized frameworks \shortcite{pbs,chan2023greedy}. A key trade-off of priority-based solvers is their simplicity and scalability compared to exhaustive search methods; however, they often produce highly suboptimal paths for lower-priority agents and are prone to deadlocks in densely populated environments.

Priority-based algorithms can be broadly categorized into online and offline approaches. Offline methods, such as \shortcite{pbs,ho2020decentralized,chan2023greedy}, precompute priorities and paths before execution, reducing the need for real-time decision-making but limiting adaptability to dynamic or uncertain environments. In contrast, online methods, such as \shortcite{mlkk-lmapffopadt-17,desaraju2011decentralized,velagapudi2010decentralized}, adapt priorities and paths in real time, often employing token-passing schemes to coordinate agents. For instance, \shortcite{mlkk-lmapffopadt-17} and \shortcite{desaraju2011decentralized} use a token-based priority system to allow dynamic task allocation, while \shortcite{velagapudi2010decentralized} provides a solution by dynamically adjusting priorities and handling agent interactions during execution.

Many decentralized priority-based solvers lack strong guarantees for avoiding or resolving deadlocks, particularly in highly constrained or densely populated environments. To address this, some solvers operate under the assumption that the problem is well-formed, meaning endpoints (start or goal locations) are distributed such that no single agent can block access between other endpoints. Under this assumption, these algorithms can guarantee completeness by ensuring that a path always exists without traversing another agent's endpoints. While this condition is straightforward to enforce when designing tasks, it is challenging to verify for an arbitrary task set. For instance, \shortcite{mlkk-lmapffopadt-17,vcap2015prioritized}  leverage this assumption to handle deadlock scenarios effectively but struggle in environments where this condition does not hold. These algorithms also commonly require full peer-to-peer communication across the team, which can be a limiting factor in settings where communication is constrained or unreliable.

\subsection{Consensus-Based Solvers}

Consensus-based algorithms are commonly employed in the task allocation phase of multi-task multi-agent pathfinding problems, where agents must decide how to distribute tasks among themselves before planning individual paths. These algorithms rely on decentralized coordination mechanisms to ensure agents collectively agree on task assignments, making them particularly well-suited for scenarios with large numbers of agents and dynamic environments. By enabling distributed decision-making, consensus-based approaches enhance scalability and robustness, especially when centralized control is impractical or communication infrastructure is limited \shortcite{mikkelsen2023distributed,wang2022consensus,choi2009consensus}.

Consensus-based methods come with notable trade-offs. The iterative nature of consensus-seeking processes, where agents exchange information and refine agreements, can introduce significant delays, particularly in time-critical applications. While the agreement process is designed to avoid task conflicts and deadlocks, the reliance on initial estimates or priorities can lead to suboptimal task allocations that are difficult to revise once finalized. Limited communication ranges or sparse network connectivity exacerbate these issues, as incomplete or delayed information can result in partial or incorrect agreements, undermining the system’s overall effectiveness. Moreover, achieving consensus often requires a high degree of agent coordination, which can impose additional communication overhead and limit scalability in dense environments. Conversely, reducing the level of coordination to prioritize efficiency risks local conflicts, task redundancy, or unbalanced workloads among agents.

In addition to these drawbacks, consensus-based algorithms are not typically used for direct MAPF because their focus on achieving agreement across agents is less suited for the real-time, fine-grained conflict resolution required for pathfinding. The iterative nature of consensus processes can struggle to adapt quickly to dynamic pathfinding scenarios where agent trajectories must be recalculated frequently in response to changing conditions. This makes them more effective for high-level task allocation rather than low-level path coordination.

Overall, consensus-based algorithms excel in balancing decentralized task allocation with system-wide coordination, making them a valuable component of multi-task MAPF solutions. Their effectiveness depends on careful algorithmic design to mitigate communication overhead, adapt to dynamic conditions, and balance coordination with efficiency.

\subsection{Other Solvers}
Rule-based solvers rely on predefined protocols or algorithms that agents independently follow to avoid collisions and reach their goals. These rules often involve agents negotiating paths or prioritizing movements based on criteria such as agent IDs or proximity to target locations, enabling effective coordination without centralized control \shortcite{hwang2007protocol,izadi2011rule,asama1991collision,masehian2010hierarchical}. While efficient, the performance of rule-based solvers is highly dependent on the quality of their predefined rules. As a result, they lack adaptability to dynamic scenarios and do not provide strong guarantees for avoiding or resolving deadlocks.

Potential field methods are computationally efficient techniques that model agents as being influenced by artificial forces, such as attractive forces pulling them toward their goals and repulsive forces pushing them away from obstacles and other agents. These methods can incorporate vehicle dynamics, making them suitable for environments requiring smooth trajectories \shortcite{sigurd2003uav,shim2003decentralized,matoui2017path,xie2022distributed,pradhan2018motion}. However, potential field methods offer no guarantees on collision avoidance, especially in densely populated environments or when agents become trapped in local minima.

\subsection{Contributions of PRISM}

PRISM addresses key limitations of existing multi-agent pathfinding algorithms by providing a robust framework for online, decentralized planning. Unlike traditional approaches, it operates without restrictive assumptions, enabling application to complex environments. PRISM guarantees deadlock resolution and avoidance, even with constrained communication, while dynamically adapting to changes in team composition and tasks. Despite its decentralized nature, PRISM ensures completeness and achieves a balance between scalability and efficiency. Moreover, it extends constraint-based search methods to online, decentralized settings, preserving their strengths while enhancing adaptability.

{
\begin{algorithm}[t]
\small
\caption{Modified Conflict-Based Search}
\label{alg:cbs}
\KwData{Robots $R$, Info Packets $P$}
\KwResult{Success or Failure}

$n_0 \gets$ empty CT node\\
$CT \gets$ empty priority queue\\

\color{customcolor}
\ForEach{$R_i$ \upshape{in} $R$} {
    $n_0.cstr[R_i] \gets R_i.cstr$\\
    $n_0.plan[R_i] \gets R_i.path$\\
}

\ForEach{$p_i$ \upshape{in} $P$}{
    $n_0.cstr[R_i] \gets p_i.cstr$ \\
    $N_0.plan[R_i] \gets $\textsc{LowLevel}($p_i.task, p_i.cstr$) \\ 
}

\color{black}
Insert $n_0$ into $CT$\\
\While{$CT$ not empty}{
    $n \gets$ Lowest-cost node in $CT$\\
    $conflict \gets$ Find first conflict in $n$\\ 

    \If{$conflict = \emptyset$}{ \Return Success }

    \ForEach{$R_i$ in $conflict$}{
        \color{customcolor}
        \If{$R_i$ in $P$}{ $continue$ }

        \color{black}
        $n_c \gets n$\\
        $cstr \gets$ \textsc{ResolveConflict}($R_i$, $conflict$)\\    
        $n_c.cstr[R_i] \gets n_c.cstr[R_i] + cstr$\\
        $n_c.plan[R_i] \gets$ \textsc{LowLevel}($R_i.task, n_c.cstr[R_i])$\\
        Update $n_c.cost$\\
        Insert $n_c$ into $CT$\\
    }
}
\Return Failure
\end{algorithm}
} 

\subsection{Conflict-Based Search}

Conflict-Based Search (CBS) \shortcite{cbs} is a centralized constraint-based search algorithm designed to optimally solve the MAPF problem by using a two-tiered search approach consisting of a high-level search and a low-level search. The low-level search employs a pathfinding algorithm such as A* to determine the optimal sequence of actions for an agent to move from its starting point to its goal, adhering to any constraints imposed by the high-level search. These constraints specify which states must be avoided by agents during their path. PRISM utilizes a modified version of CBS, with pseudocode provided in Algorithm \ref{alg:cbs}. Our added and modified lines are highlighted in blue in Algorithm \ref{alg:cbs}. All other lines remain unchanged from the original CBS algorithm. For more details, please refer to \shortcite{cbs}. 

The high-level search manages paths for all agents collectively by identifying conflicts between agent pairs and introducing constraints to resolve them. This is achieved through a binary tree structure known as the conflict tree (CT), where each node represents a set of constraints applied to the agents. Each CT node, denoted $n$, includes a plan ($n.plan$), cost ($n.cost$), and the set of constraints ($n.cstr$) agents must follow. The goal of the high-level search is to navigate this tree to find the lowest-cost, conflict-free node. 

CBS begins by generating a root node $n_0$ in the CT, containing each agent's individually optimal path as computed by the low-level search in a constraint-free, decoupled manner. This initialization is modified in our pseudocode (lines 3-8) and the original CBS algorithm's root initialization is omitted from our pseudocode. The root node is inserted into the CT (line 9), and the search begins by removing the lowest-cost node from the priority queue (line 11). 

The first conflict $conflict = \langle R_i, R_j, v, t \rangle$ between agent pairs is identified at this node (line 12). To resolve the conflict, the tree splits to create two child nodes $n_i$ and $n_j$ (lines 15-23, excluding lines 16-17). These child nodes inherit constraints and plans from the parent node (line 18) and receive a new constraint (lines 19-20) to prevent the conflict from recurring. For example, constraints $\langle R_i, v, t \rangle$ and $\langle R_j, v, t \rangle$ are assigned to $n_i$ and $n_j$, respectively, creating two new search instances where neither agent can occupy vertex $v$ at time $t$. After adding constraints, the low-level search replans the paths for the affected agents to comply with the updated constraints (line 21), revising each plan's feasibility and cost (line 22). The new child nodes are then inserted into the CT (line 23), and CBS continues exploring until it finds a conflict-free node (lines 13-14) or until the CT is empty (line 10), indicating that no solution exists. By individually constraining each agent involved in a conflict, CBS systematically explores all possible solutions, ultimately reaching an optimal conflict-free node in the CT.

\section{Method} \label{Method}

\begin{figure*}[t]
    \centering
    \includegraphics[width=\linewidth]{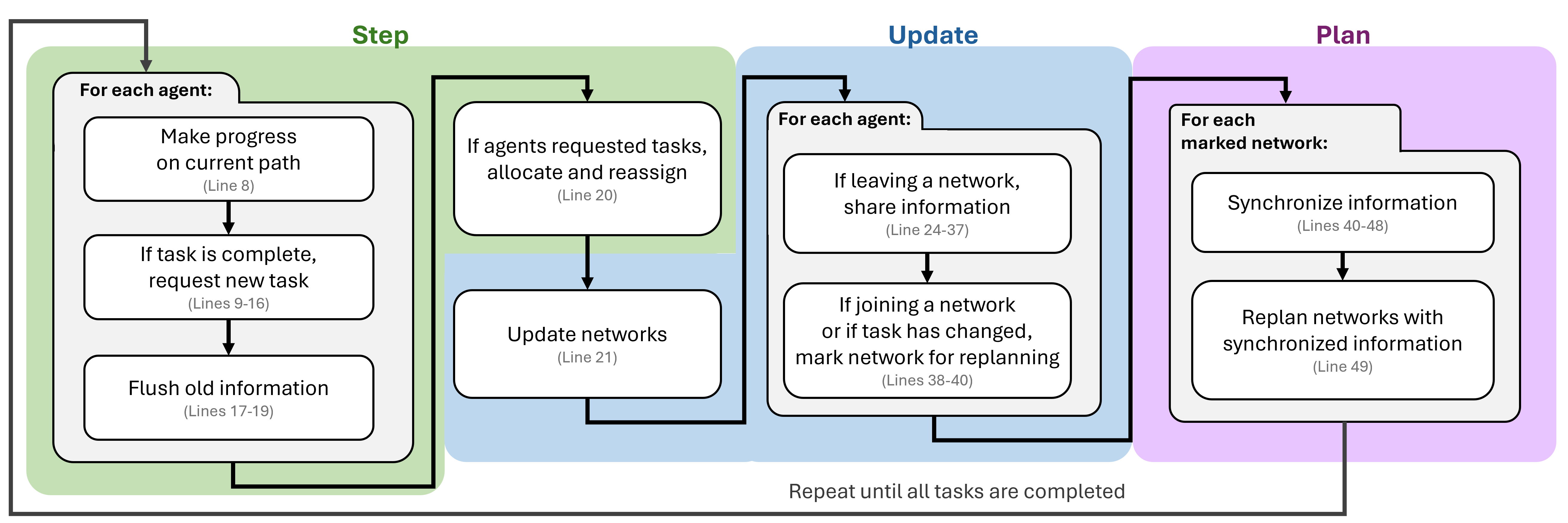}
    \caption{This flowchart illustrates the three phases of PRISM and provides a high level outline. Included are line numbers that correspond to specific steps in Algorithm \ref{alg:prism}.}
    \label{fig:flowchart}
\end{figure*}

Pathfinding with Rapid Information Sharing using Motion constraints (PRISM) focuses on enabling conflict-free pathfinding among agents while integrating dynamic task allocation. PRISM assumes the presence of a task allocator that assigns valid tasks to agents and can dynamically reassign unstarted tasks using real-time information from PRISM’s path calculations. Once an agent begins a mission task, it is locked to that agent and cannot be reassigned. However, tasks that have not yet been started can be reassigned as needed, allowing for flexible response to changing conditions.

PRISM utilizes a three-phase planning scheme combined with a Modified CBS for local network path planning, enabling seamless navigation and the integration of pathfinding with multi-task management. This approach supports advanced scheduling and planning, allowing agents to efficiently coordinate tasks and paths. In this section, we detail the information available to agents, describe the structure of the info packets used for communication, and provide an overview of the complete planning scheme.

\subsection{Agents} In PRISM, each agent $R_i$ manages a critical set of data that includes its unique identifier ($R_i.id$),  current assigned task ($R_i.task$), the planned path ($R_i.path$), applied motion constraints ($R_i.cstr$), and its collection of info packets ($R_i.pkts$). The task and path information directs the agent towards its intended destination and outlines the journey. Motion constraints resolve inter-agent conflicts and maintain continuity during replanning by ensuring that modifications made to paths in earlier iterations are preserved. These constraints are retained and referenced in subsequent planning phases, preventing inconsistencies and ensuring smooth trajectory adjustments. Additionally, agents store info packets they receive, which provide a snapshot of their current understanding of other agents' positions and trajectories.

        \begin{algorithm}[th!]
\scriptsize
\caption{PRISM}
\label{alg:prism}
\KwIn{Robots $R$, Tasks $T$}

\textsc{InitializePlans}($T$, $R$) \\
$t_{current} \gets 0$\\
$T_{started}$, $T_{done} \gets \emptyset$ \\
\While{$|T_{done}| \neq |T|$ \upshape{and} $\exists \  R_i \in R$ \upshape{not at rest}}{
    $A_{requested} \gets \emptyset$ \tcp*[f]{Step}\\
    \ForEach{$R_i$ \upshape{in} $R$}{ 
        $R_i.N_{previous} \gets R_i.N_{current}$ \\
        \textsc{Step}($R_i$) \\
        \If{$R_i.task$ \upshape{completed}}{
            \uIf{$R_i.task$ \upshape{is a mission task}} {
                $A_{requested} \gets A_{requested} \cup R_i$\\
                $T_{done} \gets T_{done} \cup R_i.task$ \\
            } 
            \ElseIf{$R_i.task$ \upshape {is a transition task}}{
                Update $R_i.task$ to its mission task \\
                $T_{started} \gets T_{started} \cup R_i.task$ \\
            }
            $R_i.packets \gets \emptyset$ \\
        }
        \ForEach{$p$ \upshape{in} $R_i.packets$}{
            \If{$p.t_{flush} \leq t_{current}$}{$R_i.packets.$\textsc{Flush}($p$) }
        }
    }
    \textsc{AllocateTasks}($T \setminus T_{started}$, $A_{requested}$) \\
    $Networks \gets$ \textsc{UpdateNetworks}($R$) \tcp*[f]{Update}\\
    $N_{replan} \gets \emptyset$ \\    
    \ForEach{$R_i$ \upshape{in} $R$}{
        \If{$R_i.N_{current} \neq R_i.N_{previous}$}{
            \ForEach{$R_j$ \upshape{in} $R_i.N_{previous}$}{
                \uIf{$R_i$ \upshape{or} $R_j$ \upshape{at rest}}{$t_{flush} \gets \infty$}
                \Else{$t_{flush} \gets $\textsc{CalculateFlushTime}($R_i.cstr$, $R_j.cstr$)}
                \If{$t_{flush} = \infty$} {
                    \If{$R_i$ \upshape{not at rest}} {
                        $R_i.packets \gets R_i.packets\  \cup \ $ \textsc{CreatePacket}($R_j.id$, $R_j.cstr, t_{current}, t_{flush}$)\\
                    } 
                    \ElseIf{$R_j$ \upshape{not at rest}} {
                    $R_j.packets \gets R_j.packets \  \cup \ $ \textsc{CreatePacket}($R_i.id$, $R_i.cstr, t_{current}, t_{flush}$) \\
                    }
                }
                \ElseIf{$t_{flush} > t_{current}$}{
                    $R_i.packets \gets R_i.packets\  \cup \ $ \textsc{CreatePacket}($R_j.id$, $R_j.cstr, t_{current}, t_{flush}$)\\
                    $R_j.packets \gets R_j.packets \  \cup \ $ \textsc{CreatePacket}($R_i.id$, $R_i.cstr, t_{current}, t_{flush}$)\\
                }
            }
            $N_{replan} \gets N_{replan} \cup R_i.N_{current}$
        }
        \If{$R_i.task$ \upshape{has changed}}{
            $N_{replan} \gets N_{replan} \cup R_i.N_{current}$
        }
    }
    \ForEach(\tcp*[f]{Plan}){$N_i$ \upshape{in} $N_{replan}$}{
        $Packets_{sync} \gets \emptyset$ \\
        \ForEach{$p$ \upshape{in} $N_i.packets$}{
            \If{$p.id$ \upshape{not in} $N_i$}{
                \uIf{$p.id$ \upshape{in} $Packets_{sync}$ \upshape{and} $p.t_{flush} \neq \infty$}{
                    \If{$Packets_{sync}[p.id].t_{receive} \geq p.t_{receive}$}{ continue }
                }
                $Packets_{sync}[p.id] \gets p$
            }
        }
        \textsc{Modified-CBS}($N_i$, $Packets_{sync}$)
    }
    $t_{current} \gets t_{current} + 1$
}
\end{algorithm}

\subsection{Info Packets} Info packets in PRISM facilitate effective replanning when agents move out of communication range from one another. Info packets have limited lifetimes and are used to increase global awareness during local in-network planning. Each info packet $p_j$ contains essential data about an agent: a unique identifier ($p_j.id$), its current task ($p_j.task$), motion constraints ($p_j.cstr$), a received time ($p_j.t_{receive}$), and a flush time ($p_j.t_{flush}$).

The agent's unique identifier in the info packet is used for precise identification, which is critical when multiple packets describing the same agent exist within the network. During replanning, the identifier helps networks determine which info packet to use, avoid redundancy, and ensure data consistency. Additionally, if an agent is holding an info packet and comes within direct communication range of the agent described by the packet, it uses the identifier to discard the packet to prevent unnecessary duplication. 

The info packet's task and motion constraints are used to represent an agent's path. Instead of storing the comprehensive details of an agent's path, such as every configuration or position along the path, an agent's task and motion constraints can be used as an alternative representation. These elements are sufficient to reconstruct the agent's path as needed using a low-level pathfinder, keeping the info packets compact and efficient. 

The received time of an info packet is crucial during synchronization in replanning phases, enabling agents to identify and utilize the most recent packet available for any given agent. This ensures that plans are always based on the latest information. 

The received time of an info packet is crucial during synchronization in replanning phases, allowing agents to identify and utilize the most recently received packet for any given agent. This ensures that, when multiple packets describe the same agent, the planned paths are consistently based on the latest available information.

The flush time determines a packet's relevance to its holder. Once the flush time expires, the packet is deemed outdated and discarded, maintaining data freshness across the network.  This mechanism ensures that agents retain only relevant information packets, preventing excessive accumulation and reducing the risk of outdated or incorrect information propagating widely. 

The flush time is calculated based on the last motion constraint applied between the info packet's subject agent and the holder, reflecting the estimated time of potential interference between the agents. As a modification to standard CBS constraints, we now track the source of each constraint. For example, if a constraint is applied to agent $R_i$ at time $t$ due to a conflict with agent $R_j$, we record this constraint along with the information that it originated from $R_j$. If $R_i$ exits the shared network with $R_j$, we determine the last relevant constraint by finding the last constraint $R_j$ applied to $R_i$ and vice versa. The latest time among the constraints across both agents is then used as the flush time, providing an estimate for when $R_i$ and $R_j$’s paths diverge and thus when the info packet will no longer be relevant to its holder.

To create an info packet describing agent $R_i$, we copy the agent's unique identifier ($R_i.id$), task ($R_i.task$), and motion constraints ($R_i.cstr$), record the received time, and calculate the flush time. While the identifier, tasks, and motion constraints within each info packet are unique to the agent it describes, the received and flush times are shared between agent pairs.

\subsection{Overview} 
PRISM operates through a structured three-stage process: step, update, and plan, with a detailed flowchart depicted in Figure \ref{fig:flowchart} and corresponding algorithmic steps outlined in Algorithm \ref{alg:prism}. This structured approach ensures efficient and conflict-free navigation with a multi-agent system. In this subsection, we reference lines from both Algorithm \ref{alg:cbs} and Algorithm \ref{alg:prism}. Subsections \ref{sub:step} to \ref{sub:plan} have line references pertaining to Algorithm \ref{alg:prism}. Subsection \ref{sub:cbs} has line references pertaining to Algorithm \ref{alg:cbs}.

The process begins with each agent independently planning its initial, constraint-free path based solely on its assigned task (Alg.  \ref{alg:prism}, line 1), without considering other agents’ positions or plans. We initialize a clock (line 2) which is incremented with each iteration of the the three-stage process (line 39) and initialize the set of started and completed tasks (line 3). Then, we begin and repeat the three-stage process until all tasks have been completed and all agents are at rest (line 4). 

\subsubsection{Step Phase} \label{sub:step}
In the step phase (Alg. \ref{alg:prism}, lines 5-19), agents cache their current local network as their previous network for use in the update phase (line 7) and then advance one timestep along their planned paths (line 8). If an agent completes its task during this step (line 9), it is determined whether the task is a mission task (line 10) or a transition task (line 13). For mission tasks, the agent is marked as having requested a new task, and the completed mission task is marked as finished (lines 10-12). For transition tasks, the agent updates its task to the corresponding mission task, marking it as started (lines 13-15). Marking a mission task as started ensures that the task allocator does not redundantly reassign it to another agent. Upon completing a task, the agent flushes all associated info packets (line 16), as these packets pertain specifically to the current task path and are no longer relevant when transitioning to a new task. If an agent has not completed its task, it checks for outdated info packets and discards them based on their flush time (lines 17-19).

After all agents have progressed along their paths, agents requesting new tasks are identified, and any unstarted tasks are allocated to them (line 20). This allocation step may involve reassigning tasks among agents. If an agent's original task is reassigned, the agent will either transition to its newly assigned task or, if no new task is assigned, proceed to the goal position of its previous task. Whenever an agent's task changes, all associated info packets are flushed; this step is omitted from the pseudocode for brevity.

\begin{figure*}[ht]
    \centering
    \includegraphics[width=\linewidth]{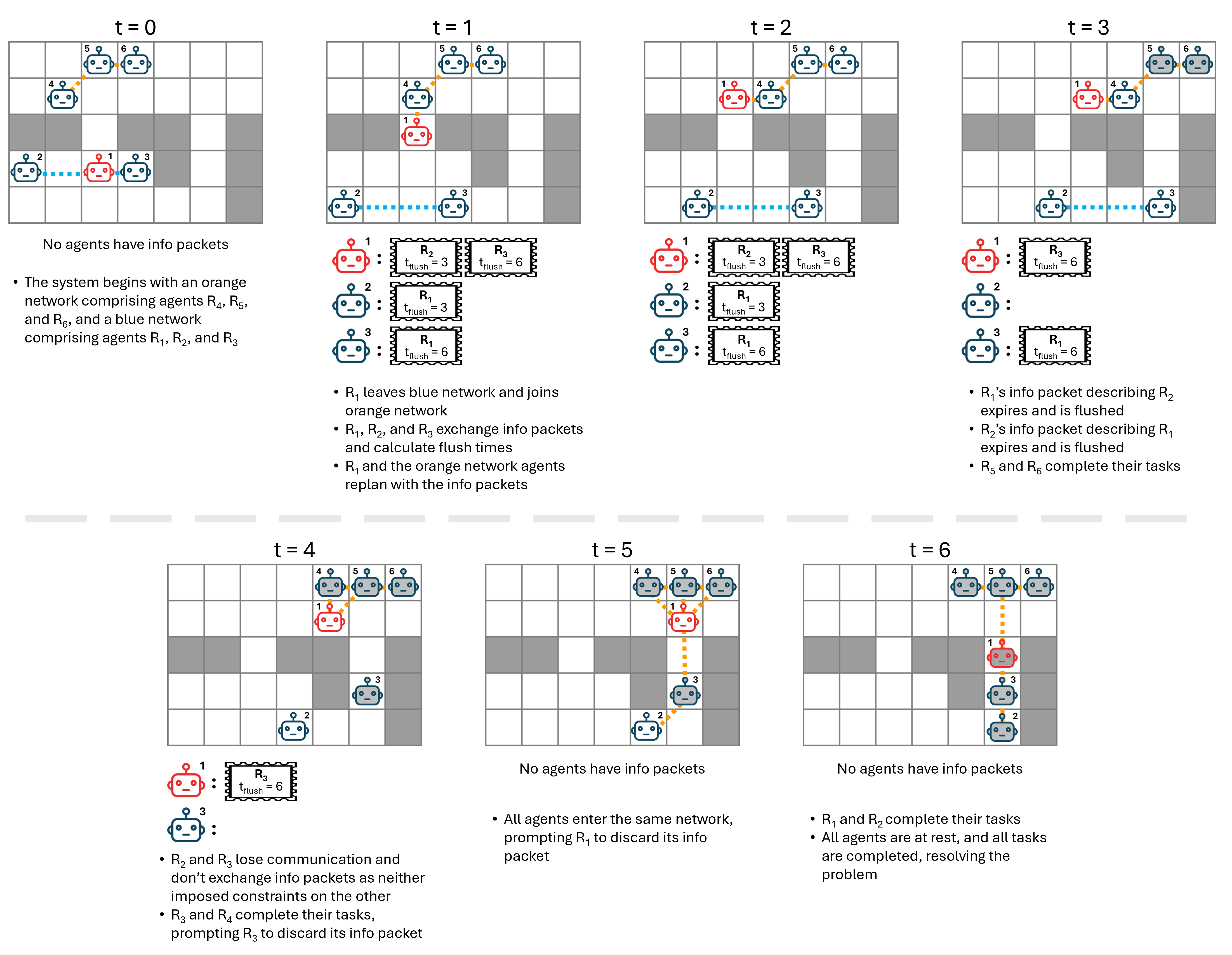}
    \caption{Example of a system consisting of two local networks using bounded info packets. Greyed-out agents indicate resting agents. Bounded info packets are discarded once their flush time expires, the holder re-establishes communication with the packet’s origin agent, or the holder completes its task.}
    \label{fig:bounded}
\end{figure*}

\subsubsection{Update Phase} \label{sub:update}
In the update phase (Alg. \ref{alg:prism}, lines 21-40), agents adjust their communication networks based on their updated positions (line 21), using multi-hop communication to establish connections in accordance with predefined protocols. Specific actions are triggered when an agent exits its previous network and joins a new network (line 24), or changes tasks (line 39). When an agent's local 
 network changes (line 24), agents broadcast and exchange info packets with the remaining network members (line 25). If either agent is at rest, we create an info packet with an infinite flush time (lines 26-27); otherwise, a bounded flush time is calculated based on their respective constraints (lines 28-29). The rationale for selecting either an infinite or bounded flush time is discussed in the following paragraph. If the flush time is infinite (line 30), the info packet is only created and applied to the moving agent (lines 31-34). If the calculated flush time exceeds the current time (line 35), the agents remain relevant to each other for a set duration, and info packets are created for both agents, with the current time recorded as the received time and the appropriate flush times applied (lines 36-37). The newly joined local network is then marked for replanning to ensure the agent's information is incorporated into the teams plan (line 38). When an agent changes tasks, the network is also flagged for replanning (lines 39-40) to incorporate the updated task into the network's strategy. Figure \ref{fig:bounded} illustrates an example of networks utilizing bounded info packets, while Figure \ref{fig:infinite} depicts networks with infinite info packets.

\begin{figure}[ht]
    \centering
    \includegraphics[width=\linewidth]{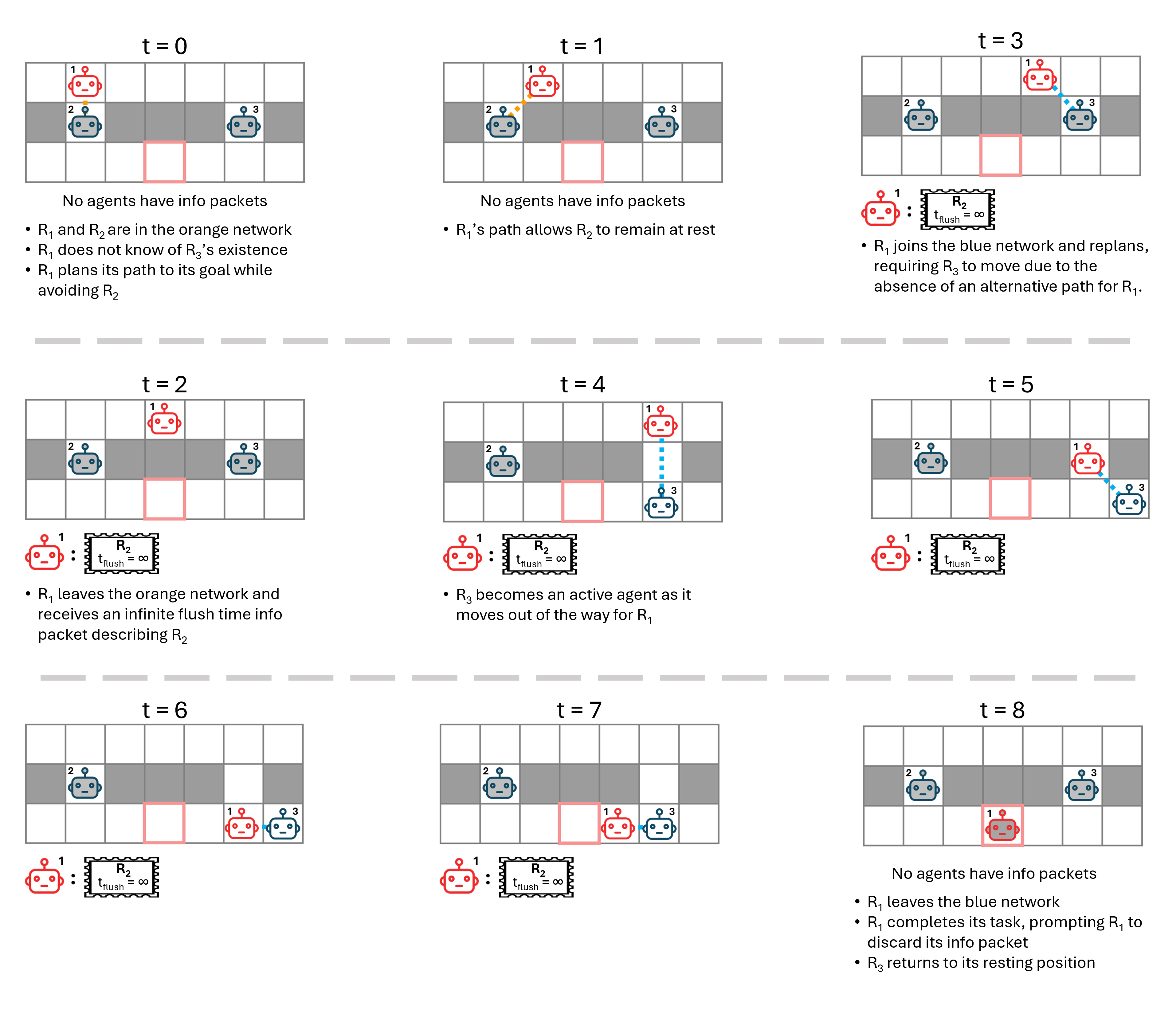}
    \caption{Example of a system with three agents using infinite info packets. Greyed-out agents ($R_2$ and $R_3$) indicate those that have reached their resting positions, while the red square marks the goal position of the red agent ($R_1$). This perspective is from agent $R_1$; at timesteps 0-2, $R_1$ is unaware of $R_3$'s existence (indicated by $R_3$ being whited out). Upon receiving an infinite info packet, $R_1$ remembers $R_2$'s presence even upon leaving the network.}
    \label{fig:infinite}
\end{figure}

Info packets with infinite flush times are critical for preventing deadlock scenarios involving resting agents and are only applied to moving agents, while info packets with bounded flush times are used for moving agents. When both agents are in motion, even as info packets expire, the dynamic interactions within the team naturally resolve deadlock scenarios over time, which will be proven in Section \ref{section:proof}. However, when some agents are at rest and at least one is moving, a specific deadlock risk arises. If a moving agent is blocked by resting agents and can communicate with only one of them at a time, an expired info packet may fail to provide sufficient information for the moving agent to navigate to its goal effectively.

For example, as shown in Figure \ref{fig:infinite}, let us assume we have a moving agent $R^m_1$ and two resting agents $\{R^r_2, R^r_3\}$. If  $R^m_1$ encounters the $R^r_2$, it might choose to avoid it rather than requesting $R^r_2$ to move. $R^m_1$ will retain an info packet for a limited time, but if the packet expires before $R^m_1$ encounters $R^r_3$, it may similarly avoid $R^r_3$ instead of prompting movement. This could result in $R^m_1$ thrashing indefinitely between $R^r_2$ and $R^r_3$, unable to progress toward its goal. To prevent such deadlocks, info packets containing the positions of resting agents are assigned infinite flush times, ensuring their information remains available to $R^m_1$ for the duration of its movement. 

It is important to note that infinite flush times are applied only to the moving agent in this scenario. This ensures that resting agents do not accumulate info packets that they cannot flush. If a resting agent is required to move from its position to enable $R^m_1$ to reach its goal, it is no longer considered at rest and instead receives a bounded info packet as necessary. Infinite info packets are applied solely to the agent holding them and are not shared with others, ensuring that the contents of these info packets remain local, avoiding unnecessary propagation though the environment. By containing the scope of these packets, deadlock situations can be resolved effectively with minimal impact on other agents, as we will demonstrate in Section \ref{section:proof}.  

Lastly, although info packets with infinite flush times persist longer than those with bounded times, their lifetime is inherently limited to the duration of the current task. When a moving agent completes its task and is assigned a new one, it automatically flushes all associated info packets, including those with infinite flush times. This mechanism prevents the unchecked accumulation of outdated packets and ensures that info packets remain task-specific.

\subsubsection{Plan Phase} \label{sub:plan}
In the plan phase (Alg. \ref{alg:prism}, lines 41-50), local networks marked for replanning undergo synchronization and path adjustment. Each network aggregates its info packets and synchronizes them to eliminate redundancies (lines 42-48), using each packet's unique identifier and received time (lines 44-46) to compile a set of non-redundant packets containing the most up-to-date information (line 48).  During synchronization, packets describing agents that are also part of the current network are excluded (line 44). Similarly, packets with infinite flush times (line 45) and older packets (line 46) are omitted from the synchronized set. However, packets with infinite flush times are still applied to their respective holders during the network's replanning phase, but they do not influence the paths of other network members. The resulting synchronized packets are then used in the Modified CBS to replan the paths of all agents in the local network (line 47).

\begin{figure}[t]
    \centering
    \includegraphics[width=\linewidth]{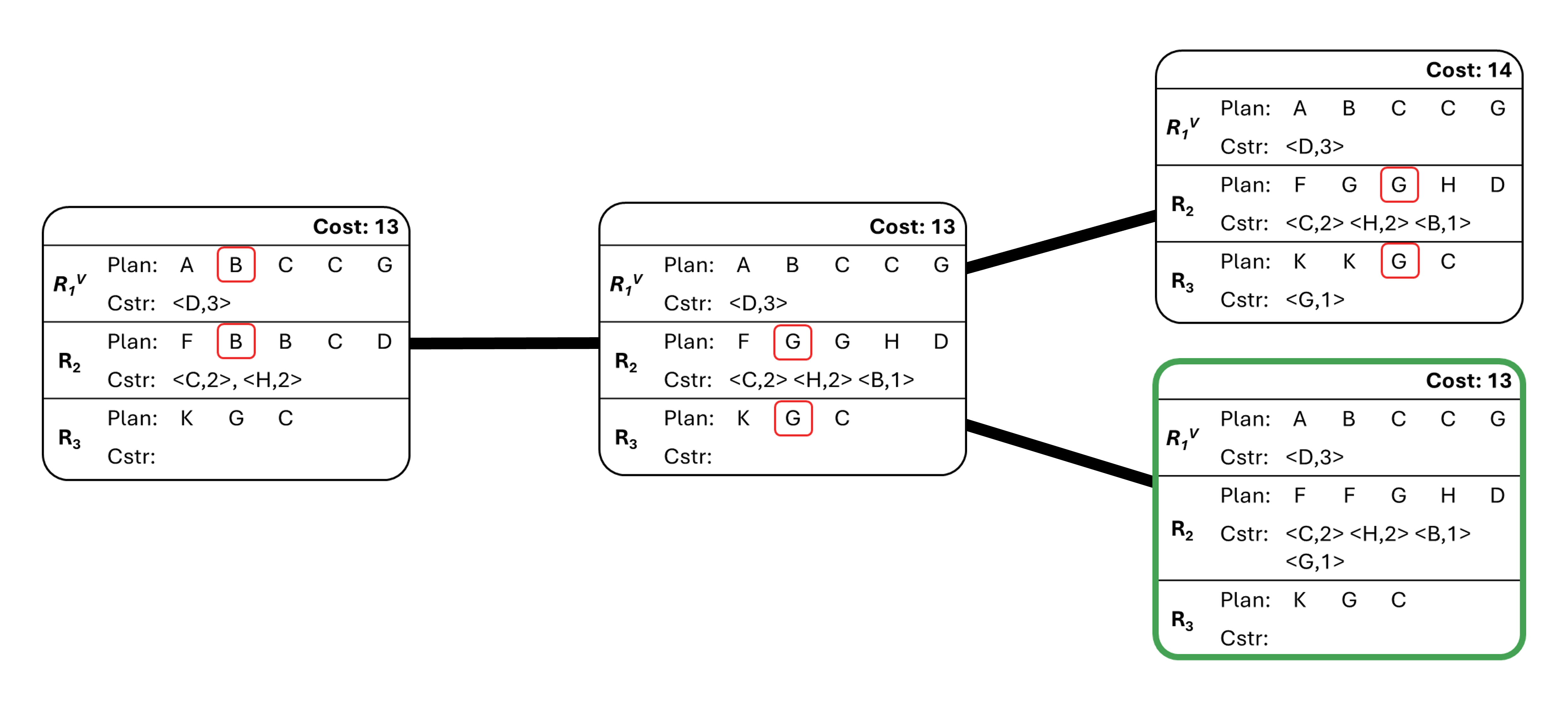}
    \caption{Example of a conflict tree from a Modified CBS call involving three agents. $R_1^v$ is an info packet representing agent $R_1$, acting as a virtual agent within the conflict tree, while $R_2$ and $R_3$ are network agents. Conflicts between $R_1^v$ and network agents result in the creation of a single child node, preserving the static path of the virtual agent. In contrast, conflicts between network agents follow standard CBS behavior, generating two child nodes. Conflicts are highlighted in red and the solution node is highlighted in green.}
    \label{fig:mod_cbs}
\end{figure}

\subsubsection{Modified CBS} \label{sub:cbs}
In the Modified CBS framework, we use agents within a local network and their associated info packets to determine non-conflicting paths for all agents in the network. First, we create a root node, populating it with each agent's information (Alg.  \ref{alg:cbs}, lines 3-5) and associated info packets (lines 6-8). This includes copying the constraints and path for each agent. For info packets, each packet’s task and motion constraints are used to generate the path of the packet’s agent, which is expanded using a low-level search algorithm (line 8).

CBS operates with a key modification: info packets are treated as virtual agents. In the event of a conflict between an info packet agent and a network agent, the info packet imposes constraints on the network agent, but network agents cannot impose constraints on info packet agents (lines 16-17). This rule ensures that the paths of info packet agents remain static, preserving the integrity of the information they carry, while allowing only the network agents' paths to be modified. Consequently, conflicts between info packet agents and network agents result in the creation of a single conflict tree node. An example of this behavior is illustrated in Figure \ref{fig:mod_cbs}.

After initializing the root node and applying the info packet constraint rule, CBS largely proceeds as usual, with a minor adjustment to the low-level search. During low-level search calls (line 21), infinite info packets are treated as permanent obstacles for the packet holder. The resting agent's position, stored in the info packet, is considered a static obstacle exclusively for the agent holding the packet. Once a local network team solution is found, the updated paths and constraints are communicated back to each network agent.

This iterative, three-stage planning cycle continues until all tasks are completed and all agents are at rest, enabling PRISM to dynamically adapt to changing task specifications and agent conditions. When an agent’s task changes mid-execution, info packets disseminate the adjustment to relevant agents throughout the environment. The system’s regular flushing mechanism ensures the accuracy and relevance of shared information by removing outdated or incorrect data. A key contribution of this work is the strategic use of info packets to preserve and share information among agents, integrating them as virtual agents within CBS to maintain high-quality paths. This dynamic and responsive approach effectively addresses the complexities of real-time multi-agent coordination in evolving environments.

\section{Theoretical Analysis} \label{section:proof}
In this section, we will prove that PRISM is complete and capable of resolving all solvable deadlock situations. To construct this proof, we begin by stating our assumptions and defining key terms and concepts, such as deadlock and resources. Then, we provide a high-level overview of our proof before delving into the necessary lemmas to prove that both Modified CBS and PRISM are complete. 

\subsection {Assumptions \& Definitions}

\begin{definition}
    An agent is considered to be \textbf{at rest} if it has completed its assigned tasks, has no other task to complete, and requires no further motion at the current timestep. Its resting position is a position that is neither a current start or a goal position for another agent. If a resting agent is prompted to move temporarily away from its resting position, it is no longer considered to be at rest. 
\end{definition} \label{def:at_rest}

\begin{definition}
    An \textbf{info packet} in PRISM is a structured message exchanged between agents to preserve and transmit critical planning information after communication is lost. Each info packet captures a snapshot of an agent’s identity, task, and motion constraints at the time of network separation, along with a received time indicating when the packet was generated and a flush time denoting its relevance duration. This flush time can be bounded or infinite. Bounded flush times are computed based on the last known conflict between the agents and reflects the expected time window in which the agents’ paths may still intersect. During local replanning, info packets enable disconnected agents to make informed decisions by leveraging recent state information about their peers, ensuring coordination even outside direct communication. The identifier ensures consistency and prevents redundancy, while the minimal representation of motion constraints allows efficient path reconstruction without retaining full trajectories.
\end{definition}

\begin{definition}
    A \textbf{local network} is a group of agents that belong to the same connected component and can communicate with one another using multi-hop communication. Each local network maintains state information about its constituent agents and supports coordinated planning through a centralized subproblem: a designated agent solves the MAPF instance using the collective information, then distributes the resulting plans to all members. This ensures that all agents in the network operate with a consistent view of the current state and planned actions.
\end{definition}

\begin{definition}
    The \textbf{system} refers to the collection of all local networks operating together in a shared environment. 
\end{definition}

\begin{definition}
    A \textbf{resource} is a specific element of the environment's representation, such as a single vertex (e.g., a graph node or grid cell) or a single edge, that an agent can occupy or traverse. Together, all resources make up the free space in the environment's representation.
\end{definition}

\begin{definition}
    A \textbf{deadlock} is a situation in which no agent can proceed because all involved agents are waiting on others to release shared resources. 
\end{definition}

\begin{assumption}
    The multi-task multi-agent pathfinding (MAPF) problem addressed by PRISM assumes that each agent is assigned a solvable motion task with a unique start and goal position, distinct from those of all other agents. This guarantees that no two agents share the same start or goal, ensuring that all tasks are inherently solvable and preventing deadlocks caused by overlapping task assignments. 
\end{assumption}

\begin{assumption}
    Once agents complete their tasks and enter a resting state, they remain active participants within the system. Resting agents continue to communicate with other agents, exchange info packets, and, if necessary, move to facilitate the progress of active agents. When a resting agent relocates to assist others, it returns to its original resting position after sufficiently clearing the way.
\end{assumption}

\begin{assumption}
    We consider a constant environment graph $G = (V, E)$, which is shared and fully known by all agents during planning. As defined in the problem statement, at each timestep $t$, an agent located at vertex $v \in V$ may either transition to a neighboring vertex $v' \in V$ via an edge $(v, v') \in E$, or remain at $v$. Each transition and wait action incurs a finite cost, and agents cannot execute infinite cycles within the graph, ensuring all transitions progress toward a goal or a stationary state. 
\end{assumption}

\begin{assumption}
   Agents operate under a perfect communication model: all transmitted info packets are reliably received by intended recipients within the same communication network without loss, delay, or corruption. Additionally, even with constrained communication, agents are able to exchange information when they approach a potential collision. The system assumes sufficient proximity and time for communication and replanning to occur before a collision becomes unavoidable.
\end{assumption}

\subsection{Proof Sketch}

We first prove the completeness of Modified CBS (described in Section \ref{sub:cbs}) within individual networks, accounting for the inclusion of info packets. Next, we demonstrate how the integration of Modified CBS, info packets, and selective constraint application ensures deadlock-free operation in all solvable scenarios, establishing PRISM's completeness. 

Modified CBS extends CBS by incorporating info packets, relying on the original completeness guarantees of CBS. As a complete MAPF algorithm \cite{cbs}, CBS guarantees a solution if one exists and can identify unsolvable instances with extensions such as \cite{yu2015pebble}. 

PRISM addresses local subproblems using information from individual local networks. Within each network, agents make local decisions governed by network agents and info packets. We show that through Modified CBS, these local decisions still preserve CBS's completeness guarantees and are sufficient to ensure deadlock resolution, confirming PRISM's overall completeness.

\subsection{Completeness of PRISM}

\begin{property}
    When operating within a local network without the use of info packets, Modified CBS reduces to standard CBS. As shown in the original completeness proof of CBS \cite{cbs}, the algorithm is guaranteed to find a valid solution if one exists. Therefore, Modified CBS inherits this completeness property and will return a valid solution in such settings.
\end{property}
\begin{lemma}
    Modified CBS will return a valid solution within a local network consisting of info packets with bounded flush times.
\end{lemma}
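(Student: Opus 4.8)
The plan is to show that, when every info packet in the local network carries a bounded flush time, a run of Modified CBS is really a run of standard CBS on an \emph{augmented} MAPF instance, so that completeness follows from Property~1 (and hence from the original CBS completeness proof). First I would make the augmented instance precise: the network agents are the ordinary planning agents, and each bounded info packet $p_j$ contributes a \emph{virtual agent} whose trajectory is the path produced by the low-level search from $p_j.task$ and $p_j.cstr$ at root initialization (Alg.~\ref{alg:cbs}, line~8). By Assumption~1 the underlying task of $p_j$ is solvable, so this path exists, is finite, and is consistent with $p_j.cstr$; after reaching its goal the virtual agent is treated as resting there. Because the instance is well-formed (Assumption~1), a virtual agent parked at its goal occupies a vertex that is neither a start nor a goal of any network agent, so it can never permanently block a network agent: there is always a route between any two endpoints that avoids it. This fixes the solution space Modified CBS searches: collision-free assignments of paths to the network agents that also avoid the finitely many fixed virtual trajectories.

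The key step is to verify that the conflict tree still enumerates this augmented solution space. Conflicts between two network agents are resolved by the usual two-way split, which is complete by the CBS argument. For a conflict $\langle R_i^{v}, R_j, v, t\rangle$ between a virtual agent and a network agent, Modified CBS creates only the single child that constrains $R_j$ (Alg.~\ref{alg:cbs}, lines~16--17). I would argue that this discards nothing: the virtual trajectory is pinned in every descendant node, so $R_i^{v}$ is at $(v,t)$ in every candidate solution, hence any conflict-free solution must have $R_j$ avoiding $(v,t)$; the omitted branch (in which $R_i^{v}$ would yield) contains no solution, so the retained child still roots a subtree with a conflict-free node whenever the full CBS tree would. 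Combined with a standard termination argument — constraints accumulate monotonically along each branch, and by Assumption~3 the low-level search operates on a finite time-expanded graph of $G$ (finite action costs, no infinite cycles) with a horizon large enough to route around the fixed trajectories — this gives that Modified CBS halts and returns a conflict-free team plan whenever one exists. I would also note that one does exist here: by well-formedness and Assumption~2 the network agents can first retreat, one at a time, to non-blocking endpoints while avoiding the finite virtual trajectories, wait until the last virtual agent has parked, and then execute a well-formed plan for their own tasks.

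The hard part will be the justification of the one-sided branch on virtual-versus-network conflicts. The subtlety is that ``valid solution'' inside a local network must be defined \emph{relative to} the reported virtual trajectories — a disconnected agent cannot be replanned — so I must argue both that this is the appropriate notion and that it coincides exactly with the branch Modified CBS keeps, so completeness with respect to that notion is not weaker than what is claimed. A secondary difficulty is the retreat-and-wait solvability argument: I have to ensure the retreating network agents collide neither with one another nor with the fixed virtual trajectories during the wait phase, which means leaning on the well-formed ``push one agent at a time into a non-endpoint cell'' structure rather than a naive ``everyone freezes in place'' strategy, since freezing in place can intersect a moving virtual trajectory.
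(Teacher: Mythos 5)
Your proposal is correct in its essentials but argues along a genuinely different line than the paper. The paper's proof stays at the level of individual constraints: it invokes the completeness of standard CBS, notes that each bounded packet only adds one-way, temporally bounded constraints (Constraint Rule 1), and argues that any such constraint can be satisfied by waiting past the finite constraint time or by replanning, so at least one finite conflict-resolution branch leading to a valid plan survives. You instead recast the whole run as standard CBS on an augmented instance in which each bounded packet is a virtual agent with a pinned, finite trajectory, and then justify the one-sided split on virtual-versus-network conflicts directly: since the virtual trajectory is fixed in every descendant node, the omitted ``virtual agent yields'' branch contains no solution relative to that fixed trajectory, so discarding it loses nothing. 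This is arguably the more rigorous treatment of the one modification that actually distinguishes Modified CBS from CBS -- the paper never explicitly addresses why pruning half of each virtual-agent split preserves completeness, whereas you make both the pruning argument and the relativized notion of ``valid solution'' (collision-free with respect to the reported virtual trajectories) explicit. Two caveats: first, the paper proves only \emph{conditional} completeness (``given that a valid solution exists''), so your retreat-and-wait existence argument goes beyond what the lemma requires, and it is also the weakest link -- Assumption 1 guarantees distinct, safe endpoints but not the full well-formedness needed to rule out, say, a network agent trapped in a dead-end corridor that a fixed virtual trajectory enters, so I would drop or clearly condition that part. Second, your termination claim via a ``sufficiently large horizon'' on the time-expanded graph needs the same care the original CBS proof takes (bounding solution cost), but the paper's own proof is no more precise on this point.
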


\begin{proof}
    We assume a set of solvable motion tasks (Assumption 6.1), a known environment graph (Assumption 6.3), and perfect communication (Assumption 6.4). Let $\mathcal{N}$ denote a local communication network of agents executing Modified CBS with the exchange of bounded info packets. 

    Recall that standard CBS is known to be complete \cite{cbs}. The overall state space of agent configurations (including waiting actions) is infinite. There are also infinitely many valid solutions because an agent may wait arbitrarily long. Our focus is on the existence of at least one valid conflict resolution sequence. Standard CBS is complete in the sense that if a valid collision-free plan exists, there is at least one finite sequence of conflict resolutions (i.e., a finite branch of the conflict tree) that leads to such a solution.

    Under PRISM's policy, a bounded info packet is broadcast to every agent in the local network immediately after either (1) a new agent enters the network (Algorithm \ref{alg:prism}, Line 24)  or (2) any member's task changes (Algorithm \ref{alg:prism}, Lines 39). Each bounded info packet can introduce additional bounded motion constraints to the low-level planning process. These are applied according to the following constraint rule, which is also the only algorithmic modification that is introduced to the CBS framework: 
    \begin{enumerate}
        \item \textbf{Constraint Rule 1 (One-Way Constraint Influence for Bounded Info Packets):} A bounded info packet can impose a temporal and spatial motion constraint on every network agent $R_i \in \mathcal{N}$ that is in the same network $\mathcal{N}$ as the packet's holder. These constraints restrict the agents from occupying specific vertices at designated times, but they do not influence the planning of the agent whose behavior is described by the info packet (Algorithm \ref{alg:cbs}, Lines 16-17). 
    \end{enumerate}

    Given these constraint rules, any constraint introduced by a bounded info packet is adhered to by the affected agent in one of two ways: waiting or replanning. An agent can choose to execute wait actions to delay arriving at a vertex until after the constrained timestep has passed. More formally, if a constraint forbids an agent from occupying a vertex at time $t$, the agent can wait at its current vertex such that it does not reach the forbidden vertex until $t' > t$, thereby satisfying the constraint. Alternatively, if waiting is not feasible or efficient, the agent can replan its path to avoid the constrained region altogether. Standard CBS guarantees that if a valid solution exists, there is at least one finite sequence of conflict resolutions in which the constraints are resolved via adjusted paths. 

    Although the set of valid solutions is itself infinite because an agent can always wait arbitrarily long, the key observation is that the introduction of bounded info packet constraints does not eliminate every valid solution. In fact, if a valid collision-free plan exists, then by the completeness of standard CBS, there still remains a finite conflict resolution sequence that achieves that plan. The bounded constraints, as imposed by Constraint Rule 1, only restrict certain states temporarily. As a consequence, Modified CBS still only needs to resolve a finite number of conflicts before arriving at a valid solution.

    Thus, despite the additional constraints introduced by bounded info packets, the solution space still contains a valid solution that is guaranteed to be explored by Modified CBS, given that a valid solution exists. Because every bounded constraint carries a finite constraint application time $t$ the maximum delay introduced by waiting out a specific motion constraint is strictly bounded by $t - t_{current}$; hence an agent cannot be forced into an infinite wait loop, and eventual progress is guaranteed. Therefore, Modified CBS remains complete under the application of bounded info packet constraints. 
    
\end{proof} 


\begin{lemma}
    A local network utilizing infinite info packets will guarantee the discovery of a valid solution, if one exists, when using Modified CBS.
\end{lemma}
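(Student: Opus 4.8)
The plan is to follow the template of the proof of Lemma 1 and isolate the single new ingredient contributed by infinite info packets, then show it preserves the completeness of CBS. In Lemma 1 the only change to CBS was Constraint Rule 1, and every bounded constraint could be discharged by waiting or replanning. For infinite info packets I would add one further modification --- call it the \emph{static-obstacle rule}: an infinite info packet held by agent $R_h$ injects the resting position $v_r$ of its subject agent into $R_h$'s low-level search as a time-independent obstacle (Algorithm \ref{alg:cbs}, Line 21), and, per the plan phase (Section \ref{sub:plan}) and the Modified-CBS description (Section \ref{sub:cbs}), it is applied to $R_h$ alone and does not influence the paths of any other network member. Equivalently, $R_h$'s low-level search runs on the subgraph obtained from $G = (V, E)$ by deleting the finite set of resting vertices carried by $R_h$'s infinite packets, while everything else in Modified CBS --- conflict detection, the binary split on pairs of genuine network agents, and Constraint Rule 1 for bounded packets and virtual agents --- is untouched. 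Because an infinite packet held inside a Modified CBS call never describes a current network member (such packets are excluded during synchronization and are discarded as soon as direct communication with their subject is re-established), the virtual resting vertex never coincides with a real agent occupying it.

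First I would establish feasibility preservation. By Assumption 6.1 and the definition of \emph{at rest}, a resting position is never the start or goal of any agent, so deleting resting vertices from $R_h$'s search graph never removes $R_h$'s goal, and the search graphs of all other agents are unchanged. Accordingly I would read the hypothesis ``if one exists'' as ``if a collision-free network plan respecting Constraint Rule 1 and the static-obstacle rule exists'': given such a witnessing plan, every per-agent (possibly reduced) search graph still admits that agent's portion of the plan, so the argument of Property 1 and Lemma 1 applies verbatim --- the solution space still contains the witness and the high-level search of CBS is guaranteed to explore a finite conflict-resolution branch reaching it. Next I would dispatch termination and the absence of livelock: each reduced graph is finite with finite action costs and no executable infinite cycle (Assumption 6.3), so every low-level call halts; and because an infinite packet contributes a purely static constraint rather than a time-indexed one, it introduces no waiting dynamics and cannot force an unbounded wait loop. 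Combined with the finite wait bound from Lemma 1, each active constraint is either waited out in finitely many steps or routed around, so the high-level search descends a finite branch to the witness. Finally, multiple infinite packets, and arbitrary mixtures of infinite packets, bounded packets, and real network agents, compose without interference, since infinite packets only shrink one agent's search graph and leave all branching behavior intact.

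The hard part, and the point I would state explicitly, is the meaning of ``if one exists.'' The static-obstacle rule can in principle render the local subproblem infeasible --- exactly when a resting agent straddles the only corridor an active agent needs --- even though the underlying MAPF instance would be solvable if that agent moved. Lemma 2 therefore claims completeness only relative to the solution space carved out by the info-packet rules; the complementary case is precisely the resting-agent thrashing scenario described in Section \ref{sub:update}, and it is resolved one level up at the PRISM system level: by Assumption 6.2 the blocking agent leaves its resting state, is thereafter described by a bounded rather than an infinite packet, and is free to move aside and later return. Making this boundary precise is what hands off cleanly to the subsequent system-level completeness argument; inside a single Modified CBS call, Lemma 2 reduces to CBS completeness over the per-agent (reduced) search graphs.
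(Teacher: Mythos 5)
Your proof is correct in its CBS-completeness machinery, but it takes a genuinely different route from the paper on the one point that matters, and the difference changes what is actually being proved. The paper's central ingredient is a verification step: it asserts that \emph{before} an infinite info packet is created (Algorithm \ref{alg:prism}, Lines 31--34), the algorithm explicitly checks that an alternative path exists for the holder, and it is this check that guarantees the persistent constraint never eliminates every valid solution. That is what lets the paper keep the lemma unconditional --- ``if a valid collision-free plan exists, Modified CBS finds one'' --- with the static obstacle merely restricting the search to the pre-verified alternate routes. You do not use this ingredient at all; instead you reinterpret ``if one exists'' as ``if a solution respecting the static-obstacle rule exists,'' prove conditional completeness over the reduced per-agent search graphs, and explicitly hand the infeasible case (a resting agent straddling the only corridor) up to the system level via Assumption 6.2. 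That is a weaker statement than the lemma as the paper reads it, and it shifts work into Lemma 6.3 / Theorem 6.2 that the paper discharges inside this lemma. What your version buys is honesty: your observation that deleting a non-start, non-goal vertex can still disconnect the holder from its goal is exactly the failure mode the paper's verification step is supposed to preclude, and the verification is asserted in prose rather than visible in the pseudocode, so your conditional framing is arguably the more defensible one. What the paper's version buys is a self-contained lemma plus two housekeeping facts you largely subsume but should not omit if you want to match it: the bound of $|R|-2$ infinite packets per agent (with Modified CBS eventually electing to move the resting agent if avoidance becomes infeasible), and Constraint Rule 3 (task-completion flush), which restores the excluded vertices for future planning cycles and is what makes the restriction harmless in the long run.

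If you want your proof to establish the lemma as the paper states it, the missing step is precisely the existence-of-an-alternative-path guarantee at packet-creation time (or an equivalent argument that the reduced graph always remains feasible for the holder's current task); without it, your argument proves completeness only relative to the carved-out solution space.
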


\begin{proof}
    We assume a set of solvable motion tasks (Assumption 6.1), active resting agents (Assumption 6.2), a known environment graph (Assumption 6.3), and perfect communication (Assumption 6.4). Let $\mathcal{N}$ be a local network of agents executing the Modified CBS algorithm. Within this network, infinite info packets may be exchanged when a moving agent exits a network in which it was constrained by a resting agent. These info packets are retained for the duration of the agent’s current task and introduce persistent constraints into its planning process.
    
    The additional constraints introduced by infinite info packets are governed by the following constraint rules: 
    \begin{enumerate}
        \item \textbf{Constraint Rule 2 (Holder-Specific Influence for Infinite Info Packets):} Constraints derived from infinite info packets apply exclusively to the agent that holds the packet. No other agent is affected, and the constraint does not propagate through the system or induce branching in other agents’ conflict trees. As a result, the only modification introduced by infinite info packets is to the low-level search of the packet holder; the high-level structure of CBS and the conflict tree remain unchanged (Algorithm \ref{alg:cbs}, Line 21). 
        \item \textbf{Constraint Rule 3 (Task-Specific Lifetime):} Irrespective of the flush time, all info packets are immediately discarded when an agent's task changes or is completed. This rule ensures that info packets do not persist into subsequent planning cycles (Algorithm \ref{alg:prism}, Lines 18-19). 
    \end{enumerate}

    Before an infinite info packet is created (Algorithm \ref{alg:prism}, Lines 31-34), the algorithm explicitly verifies that an alternative path exists for the holder. Consequently, the holder always follows a finite-length route and never waits indefinitely due to the infinite info packet. This verification guarantees that imposing the persistent constraint will not remove all valid solutions; rather, it guarantees that at least one valid path remains available. Within the overall infinite set of valid solutions, by verifying the existence of an alternate path before creating an infinite info packet, the algorithm ensures that the additional persistent constraint does not block this finite solution branch. Rather, it restricts the search to paths that respect the confirmed alternative routes, even after the holder loses communication with the conflicting resting agent.

    The number of infinite constraints that an agent can hold is finite and bounded by the number of agents in the system. Modified CBS further mitigates long-term accumulation of infinite info packets by eventually selecting the resting agent for replanning if avoidance becomes suboptimal or infeasible. As a result, any one agent will be constrained by at most $|R| - 2$ info packets before Modified CBS opts to move a resting agent. 

    Once the holder's task is completed, all associated info packets are removed. This ensures that previously restricted portions of the search space become available again for future planning tasks, preserving the long-term completeness of the system. 

    Thus, with the application of Constraint Rules 2 and 3, even though infinite info packet constraints persist for the duration of the current task and thereby reduce the scope of potential solutions, there always remains at least one finite sequence of conflict resolutions that yields a valid collision-free plan. Therefore, Modified CBS remains complete in the presence of infinite info packets.
\end{proof}


\begin{theorem}
    In a local network, Modified CBS maintains completeness and will return a valid collision-free plan if one exists.
\end{theorem}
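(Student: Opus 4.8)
The plan is to prove the theorem by a case analysis on which kinds of info packets are present in the local network $\mathcal{N}$, reducing each case to a result already in hand and then treating the one genuinely new case — a network that simultaneously carries bounded and infinite info packets — by arguing that the two constraint rules act on structurally disjoint parts of the search. Concretely: if $\mathcal{N}$ holds no info packets, Property 1 applies verbatim; if $\mathcal{N}$ holds only bounded packets, Lemma 1 applies; if $\mathcal{N}$ holds only infinite packets, Lemma 2 applies. So the substance of the argument is the mixed case, and the theorem is essentially the assembly of Property 1, Lemma 1, and Lemma 2 under that case split.

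For the mixed case I would first record the structural invariants from the constraint rules. Bounded info packets inject \emph{high-level}, time-stamped CBS constraints onto network agents (Constraint Rule 1), each with a finite application time $t$, and never constrain the virtual agent the packet describes, so a conflict involving that virtual agent spawns a single child node while conflicts between ordinary network agents branch exactly as in standard CBS. Infinite info packets, by Constraint Rule 2, are not high-level constraints at all: they modify only the \emph{low-level} search of their holder, acting there as static obstacles, and induce no branching in anyone's conflict tree. Hence the conflict tree Modified CBS builds in the mixed case has the same finite branching structure as a standard CBS conflict tree on the same set of ordinary agents, and every low-level search still terminates because edge and wait costs are finite and no infinite cycles occur (Assumption 6.3).

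Next I would show that the residual search space — the joint plans respecting all bounded and all infinite constraints at once — is nonempty whenever a collision-free plan exists. This is the step I expect to be the main obstacle, since it means ruling out a harmful interaction between the two restriction types, namely an agent that is both the holder of an infinite packet and the target of bounded constraints. I would handle it as follows. By the pre-creation check for infinite packets (Algorithm \ref{alg:prism}, Lines 31--34), every infinite constraint is issued only after a finite-length alternative route for the holder has been verified, so infinite constraints alone never eliminate all of that agent's options (as in Lemma 2). Bounded constraints, by contrast, are satisfiable purely by waiting: the holder can delay reaching a forbidden vertex until after time $t$, a delay bounded by $t - t_{current}$ (as in Lemma 1). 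Composing these, a holder subject to both can wait out each of its finitely many bounded constraints and then execute its pre-verified alternate route, which — being a route that permanently avoids the resting agent encoded in the infinite packet — is unaffected by whether that packet expires or persists. Since any agent carries at most $|R| - 2$ infinite packets before Modified CBS instead selects the resting agent for replanning, and since Assumptions 6.1 and 6.2 guarantee the underlying tasks are solvable and that resting agents relocate when required, at least one valid joint plan survives all imposed constraints.

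Finally I would conclude by invoking the completeness of standard CBS over this residual space: Modified CBS's high-level search is exactly standard CBS's best-first exploration of a finitely-branching conflict tree, and we have shown that tree still contains a finite conflict-resolution branch terminating in a collision-free plan whenever one exists, so that branch is reached and the plan returned; when the local subproblem is genuinely unsolvable, the same best-first exhaustion (with the pebble-motion unsolvability test noted in the proof sketch) reports failure. This establishes that Modified CBS maintains completeness within a local network and returns a valid collision-free plan if one exists.
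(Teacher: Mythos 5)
Your proposal is correct and takes essentially the same route as the paper: the theorem is proved by assembling Property 1, Lemma 1, and Lemma 2, arguing that bounded and infinite info-packet constraints each preserve at least one finite conflict-resolution branch leading to a valid plan. Your explicit treatment of the mixed case --- noting that bounded packets act only on the high-level conflict tree while infinite packets act only on the holder's low-level search, and composing the wait-out argument with the pre-verified alternate route --- is actually more careful than the paper, which simply asserts that the two constraint types jointly preserve the finite branch without addressing their interaction.
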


\begin{proof}
   We assume a set of solvable motion tasks (Assumption 6.1), active resting agents (Assumption 6.2), a known environment graph (Assumption 6.3), and perfect communication (Assumption 6.4). Let $\mathcal{N}$ be a local network of agents executing the Modified CBS algorithm. Within $\mathcal{N}$, agents exchange info packets during the planning process. Two types of info packets are employed: 
   \begin{itemize}
       \item \textbf{Bounded Info Packets} add temporary constraints and are governed by Constraint Rule 1 (One-Way Constraint Influence for Bounded Info Packets) and Constraint Rule 3 (Task-Specific Lifetime). 
       \item \textbf{Infinite Info Packets} are generated when a moving agent exits a network in which it was previously constrained by a resting agent. These packets are governed by Constraint Rule 2 (Holder-Specific Influence for Infinite Info Packets) and subject to Constraint Rule 3 (Task-Specific Lifetime). 
   \end{itemize}

    By Lemma 6.1, the additional constraints from bounded info packets, although they may restrict certain states in the planning process for the network of agents, do not eliminate all valid solutions (Constraint Rule 1). In every instance, any constraint can be adhered to by either executing a wait action or by replanning an alternate route. Thus, there always remains at least one finite branch (i.e., a finite sequence of conflict resolutions) leading to a valid solution. 

    Similarly, by Lemma 6.2, before an infinite info packet is created the algorithm verifies that the packet's holder has an alternative path that avoids the conflicting resting agent. Thus, when the persistent constraint is applied (governed by Constraint Rule 2), it does not remove all valid solutions. Instead, it restricts the planning search to paths that respect the confirmed alternate routes while leaving the finite branch corresponding to a valid collision-free plan intact. 

    Since the set of valid solutions is infinite, completeness is understood in the sense that there exists at least one finite branch of the conflict tree that leads to a valid solution. As both bounded and infinite info packet constraints preserve the existence of such a finite branch via their respective constraint rules and the checks for alternate paths, Modified CBS is guaranteed to eventually explore it if a valid collision-free plan exists.

    All three lemmas invoke Algorithm \ref{alg:prism} (Lines 24-40) for packet creation and Algorithm \ref{alg:cbs} (Lines 16-21) for one-way constraint application, making the completeness argument explicitly dependent on the stated information-sharing policy and agent responses. Because Modified CBS is invoked each time the network gains a member or any member’s task changes (Algorithm \ref{alg:prism}, Lines 41-49), the constraints and paths supplied to the solver are finite and fully reflect the network’s latest state, capturing every conflict currently known and allowing them to be resolved in that invocation. Thus, combining these results, Modified CBS maintains completeness within any local network $\mathcal{N}$: if a valid solution exists, a finite conflict resolution sequence will be found and the corresponding collision-free plan returned. 
    
\end{proof}

\begin{implication}
    Any potential deadlock among agents in a local network is guaranteed to be resolvable through Modified CBS.
\end{implication}


\begin{lemma}
    Modified CBS and info packets resolve any potential deadlock in systems with agents belonging to different local networks.
\end{lemma}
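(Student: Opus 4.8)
\textit{Proof proposal.} The plan is to reduce the multi-network situation to the single-network case already settled by Theorem 6.1 and its Key Implication. First I would make the notion of a cross-network deadlock precise: a non-empty agent set $D$, partitioned across two or more distinct local networks, in which every member is blocked in the sense that each of its progressing actions requires a resource currently occupied or reserved by another member of $D$. The goal is to show that, under Assumptions 6.1--6.4, no such configuration persists for unboundedly many timesteps.

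I would then split on the composition of $D$. \textbf{Case 1: all members of $D$ are moving.} Here the key observation is that genuine contention for a shared resource forces communication: by Assumption 6.4, two agents close enough to collide --- hence close enough to block each other on a common vertex or edge --- exchange info packets and therefore lie in one local network before the collision becomes unavoidable, so the mutual-blocking relation underlying $D$ is actually realized inside a single network. Theorem 6.1 then yields a collision-free plan for that network and dissolves the deadlock. While the agents are still in separate networks they are spatially separated and not truly blocking one another, so there is nothing yet to resolve; the bounded info packets they carry only keep their independently computed plans mutually consistent, and by Lemma 6.1 each bounded constraint has a finite application time, so (together with Assumption 6.3) no moving agent is trapped in an unbounded wait and progress is eventually made.

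\textbf{Case 2: $D$ contains at least one resting agent.} This is exactly the thrashing scenario motivating infinite info packets. A moving agent that was ever constrained by a resting agent $R^r$ and subsequently left that network carries an infinite info packet for $R^r$, which by Lemma 6.2 persists for the agent's current task and is treated as a permanent obstacle in its low-level search; hence the agent plans a route avoiding every resting agent it has met. Since Assumption 6.1 makes every goal a safe resting location distinct from other agents' starts and goals, no resting agent permanently occupies a resource the moving agent strictly needs, and the existence of such an avoiding route is verified before the packet is created. If avoidance nonetheless becomes infeasible or only attainable at unacceptable cost, Modified CBS eventually selects $R^r$ for replanning; by Assumption 6.2, $R^r$ then ceases to be at rest, rejoins the relevant network with a bounded packet, and the instance collapses to Case 1. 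Because the number of resting agents is finite and each can be activated only finitely often before the moving agent reaches its goal, this cannot loop forever.

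Finally I would combine the cases: any apparent cross-network deadlock is either (i) not yet a real deadlock, because the involved agents are out of communication and hence not actually contending for a resource; (ii) confined to one local network and resolved by Theorem 6.1; or (iii) broken by activating a finite sequence of resting agents, each step reducing to (ii). Together with Assumption 6.3 --- finite action costs and no infinite cycles --- this bounds the total number of replanning events and shows that PRISM resolves every solvable deadlock spanning multiple local networks. The step I expect to be the main obstacle is the crux of Case 1: rigorously proving that a \emph{persistent} mutual-blocking relation among moving agents must be confined to a single network, and ruling out a pathological relay in which networks repeatedly merge and split so the contending agents are never co-located long enough for one Modified CBS invocation to fix everything. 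I would handle this with a well-foundedness / potential-function argument, using Assumption 6.3 and the finiteness of agents, tasks, and resting-agent activations to bound the number of network changes and thereby guarantee that some Modified CBS call eventually sees the full conflict and resolves it.
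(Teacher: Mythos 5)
Your proposal follows essentially the same route as the paper: it relies on Assumption 6.4 to force eventual network merging so that Theorem 6.1 resolves contention among moving agents, on infinite info packets (with the pre-verified alternate path and the option of activating the resting agent) to handle deadlocks involving resting agents, and on finiteness of agents and tasks to terminate oscillation; your two-case split is just a reorganization of the paper's layered ``info packets / internal influence / network merging'' mechanisms. The ``pathological relay'' you flag as the main obstacle is precisely the thrashing scenario the paper addresses, and the paper's own resolution of it (ongoing progress of other agents plus eventual issuance of infinite info packets) is no more formal than the potential-function argument you sketch.
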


\begin{proof}
    Let Assumptions 6.1-6.4 hold. Let $R_1$ and $R_2$ be two agents in distinct local networks $\mathcal{N}_1$ and $\mathcal{N}_2$, respectively, such that a deadlock exists between these agents. Although multi-agent deadlocks may involve more than a pair of agents, the structure of the Modified CBS algorithm, which resolves conflicts via pairwise constraint application, ensures that resolving a pairwise conflict is sufficient to extend the resolution to the entire deadlock. 

    Three mechanisms provide a layered strategy for resolving inter-network deadlocks: 
    \begin{enumerate}
        \item \textbf{Resolution via Info Packets: } An agent $R_3$ that was previously part of $\mathcal{N}_1$ may later join $\mathcal{N}_2$, carrying with it a bounded info packet that describes $R_1$'s current motion constraints and task information. Once the info packet is transferred into $\mathcal{N}_2$ , Constraint Rule 1 enables $R_2$ to be influenced by the contents of the info packet such that it can replan its path to avoid conflict with $R_1$. In this way, the pairwise deadlock between $R_1$ and $R_2$ can be resolved. 

       This mechanism requires that the packet be recent enough to reflect $R_1$'s current or still-relevant motion constraints. If the info packet is outdated (e.g., if $R_1$'s path has since changed), then the packet may no longer be sufficient to guide $R_2$ toward resolution. In such cases, resolution must occur via one of the other two mechanisms. 
        
        \item \textbf{Resolution via Internal Network Influence: } Agents can directly influence each other through interactions with others in their respective networks. For instance, an agent in $\mathcal{N}_1$ may influence $R_1$'s path in a way that resolves the deadlock with $R_2$. While not sufficient on its own to guarantee resolution of all deadlocks, this mechanism can help eliminate partial dependencies and eventually lead to resolution through info packets or network merging. 
        
        \item \textbf{Resolution via Network Merging: } If info packet-based resolution is insufficient, then agents will continue to progress along their planned paths. By Assumption 6.4, we assume sufficient proximity and time for communication and replanning to occur before a collision becomes unavoidable. Thus, eventually, if a deadlock has not been resolved, the conflicting agents will eventually merge into a single local network. Once merged, complete state and constraint information is shared among all network agents, and by Theorem 6.1, Modified CBS will resolve any remaining conflicts within this unified network. 
    \end{enumerate}

    Under PRISM’s information‑sharing policy, an infinite info packet is created exactly when a moving agent exits a local network that still contains the resting agent constraining it (Algorithm \ref{alg:prism}, Lines 25-26). The packet therefore records the resting agent’s position at that instant, ensuring the infinite motion constraint mirrors the state at the precise moment of network‑membership change.

    A moving agent may ``thrash" between two local networks when each network’s bounded info packet guides it toward the other, but expires before the agent reaches its target. Consider an agent oscillating between Networks A and B. Upon leaving Network A, the agent receives a bounded info packet directing it toward Network B. However, if this packet expires before the agent arrives, it cannot influence the subsequent planning cycle. As a result, upon entering Network B, the agent may be redirected back to Network A, potentially initiating a repeated cycle.
    
    However, while the agent thrashes, other robots within Networks A and B continue progressing toward their tasks, thereby altering the state of the local networks. These dynamic changes typically disrupt the precise conditions required for persistent thrashing. In the rare event that thrashing does persist, it ultimately resolves when other agents complete their tasks and transition into resting states. Resting agents issue infinite info packets upon the thrashing agent’s exit from their networks, imposing permanent constraints that guide it along pre-verified alternative paths. These infinite packets serve as enduring guidance for all future planning iterations, ensuring the agent no longer re-enters the thrashing cycle. Therefore, the transient nature of bounded packets, coupled with the ongoing progress of other agents and the eventual issuance of infinite info packets, guarantees the termination of thrashing after a finite number of cycles.
    
    Together, these mechanisms form a layered resolution strategy. If info packets or internal influence are insufficient, network merging ensures that any remaining deadlock will eventually be resolved. Therefore, Modified CBS, augmented with info packets and executed within the dynamic network structure of PRISM, guarantees completeness even across disjoint local networks. 
\end{proof}


\begin{theorem}
PRISM is complete and can resolve all solvable deadlock situations.      
\end{theorem}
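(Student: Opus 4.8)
The plan is to assemble Theorem~6.3 directly from the results already established, treating it as a synthesis rather than a fresh argument. The statement has two parts: (i) PRISM is complete, meaning that if a valid collision-free team plan exists, PRISM eventually produces one in which all tasks are completed and all agents reach rest; and (ii) PRISM resolves every solvable deadlock. I would structure the proof around the decomposition of PRISM's execution into two regimes --- coordination \emph{within} a single local network, and coordination \emph{across} distinct local networks --- since Theorem~6.1 handles the former and Lemma~6.3 handles the latter.

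First I would fix the hypotheses: Assumptions 6.1--6.4 (solvable unique-endpoint tasks, active resting agents, a fixed known finite-cost graph with no infinite cycles, and the perfect/sufficiently-timely communication model). Then I would argue termination of the outer \texttt{while} loop of Algorithm~\ref{alg:prism}: because there are finitely many tasks, each task is completed exactly once, and the number of agents is finite, the only way the loop fails to terminate is if some agent never reaches rest --- i.e., a persistent deadlock. So completeness reduces to showing no unresolvable deadlock can occur. Here I would invoke the case split. If all agents involved in a potential deadlock lie in one local network, Theorem~6.1 (and its Key Implication) guarantees Modified CBS returns a valid collision-free plan resolving it, since the solver is re-invoked whenever the network gains a member or any member's task changes and it operates on the network's complete current state. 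If the deadlocked agents span different local networks, Lemma~6.3 gives the layered resolution: info-packet transfer (Constraint Rule~1), internal network influence, and --- as the guaranteed fallback --- network merging under Assumption~6.4, after which Theorem~6.1 applies to the unified network. The thrashing sub-case is already handled inside Lemma~6.3 via the eventual issuance of infinite info packets by resting agents, so I would simply cite that.

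To close the argument I would note that each replanning invocation resolves finitely many conflicts (by the finiteness arguments in Lemmas~6.1 and~6.2), task allocation only reassigns \emph{unstarted} tasks (so no task is relinquished infinitely often), and Assumption~6.1 ensures every goal is a valid resting position that does not block another agent's endpoint --- hence once an agent reaches its final goal it need never move again except transiently to assist others, returning afterward (Assumption~6.2). Chaining these: every deadlock is resolved in finite time, every task is eventually completed, and every agent eventually rests, so the outer loop terminates with a valid solution; therefore PRISM is complete and resolves all solvable deadlocks.

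The main obstacle I anticipate is the interaction between dynamic task (re)allocation and the completeness guarantees, which are stated for a \emph{fixed} set of assigned tasks within a network. A clean proof must argue that the sequence of (re)allocations is itself finite or at least eventually stable --- otherwise an agent could be caught in a livelock of reassignments even though each individual Modified CBS call succeeds. I would address this by appealing to the rule that started mission tasks are locked (monotone progress in $|T_{started}|$ and $|T_{done}|$) and that reassignment of an unstarted task strictly changes which agent holds it, bounding the number of reassignments per task by $|R|$; combined with finitely many tasks this yields a finite total number of allocation changes, after which the system runs under a fixed assignment and the earlier theorems apply verbatim. The remaining delicacy is simply making sure the ``eventually merge'' step of Lemma~6.3 cannot be indefinitely postponed by continual allocation churn, which the same boundedness argument rules out.
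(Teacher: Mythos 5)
Your proposal follows essentially the same route as the paper's own proof: it assembles the theorem from Theorem 6.1 (intra-network completeness of Modified CBS) together with Lemma 6.3 (inter-network deadlock resolution via info packets, internal network influence, and eventual network merging) under Assumptions 6.1--6.4. Your added arguments concerning termination of the outer loop and the finiteness of task reassignments are more than the paper itself spells out, but they supplement rather than alter the structure of the argument.
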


\begin{proof}
    Let Assumptions 6.1–6.4 hold. To establish completeness, we must show that for any system configuration, if a solution exists, PRISM will eventually find it; that is, deadlocks can always be resolved both within local networks and across different networks.

    Deadlock resolution within a single local network is guaranteed by Theorem 6.1, which proves the completeness of Modified CBS when applied locally. Lemma 6.3 extends this result to systems where deadlocks involve agents from different local networks. This lemma demonstrates that inter-network deadlocks are resolved through mechanisms such as indirect planning information transfer, internal network influence, and eventual network merging. Once conflicting agents merge into a single network, the local completeness result (Theorem 6.1) applies.

    By combining the local resolution of deadlocks (Theorem 6.1 with Lemmas 6.1 and 6.2) with the resolution of inter-network deadlocks (Lemma 6.3), we conclude that in the infinite set of valid solutions, PRISM ensures the existence of at least one finite branch of conflict resolutions leading to a valid collision-free plan. Therefore, PRISM is a complete algorithm that will eventually find a valid plan if one exists.
\end{proof}

\section{Experiments and Results} \label{Experiments}


\begin{figure*}[t!]
    \centering
    \begin{subfigure}[t]{0.4\textwidth}
        \centering
        \includegraphics[width=\textwidth]{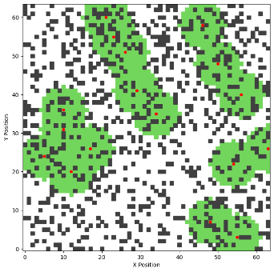}
        \caption{random-32-32-20 w/ 10\% Prox}
        \label{fig:random-prox}
    \end{subfigure}%
    ~ 
    \begin{subfigure}[t]{0.4\textwidth}
        \centering
        \includegraphics[width = \textwidth]{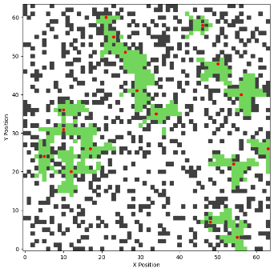}
        \caption{random-32-32-20 w/ 10\% LoS}
        \label{fig:random-los}
    \end{subfigure}
    ~ 
    \begin{subfigure}[t]{0.25\textwidth}
        \centering
        \includegraphics[width = \textwidth]{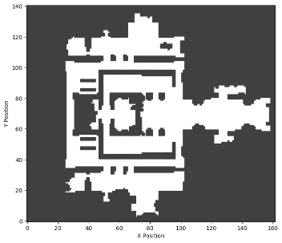}
        \caption{ht\_chantry}
        \label{fig:ht-chantry}
    \end{subfigure}
        ~ 
    \begin{subfigure}[t]{0.25\textwidth}
        \centering
        \includegraphics[width = \textwidth]{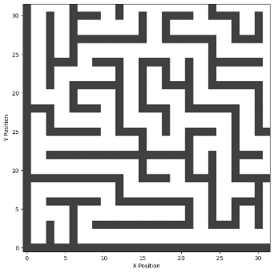}
        \caption{maze-32-32-2}
        \label{fig:maze-32}
    \end{subfigure}
        ~ 
    \begin{subfigure}[t]{0.25\textwidth}
        \centering
        \includegraphics[width = \textwidth]{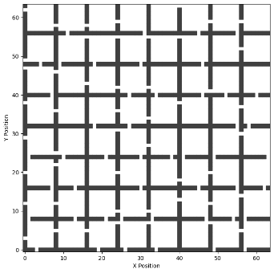}
        \caption{room-64-64-8}
        \label{fig:room-64}
    \end{subfigure}
        ~ 
    \begin{subfigure}[t]{0.45\textwidth}
        \centering
        \includegraphics[width = \textwidth]{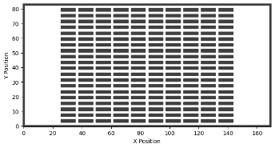}
        \caption{warehouse-10-20}
        \label{fig:warehouse-10}
    \end{subfigure}
    \caption{This figure shows the environments and examples of (a) 10\% proximity and (b) LoS communication protocols. Note that for LoS, it is assumed agents can see all other agents within a 4 diameter grid cell proximity of itself to avoid immediate collisions. Agent positions are shown in red and the range of communication is shown in green.}
    \label{fig:exp_setup}
\end{figure*}
In this section, we will discuss how we tested the soundness, robustness, and scalability of PRISM through two sets of experiments. We begin with a description of the experimental setup, followed by an analysis of PRISM’s performance in these experiments. Finally, we conclude with a discussion of the results from our physical system validation.

Empirical evaluations demonstrate that PRISM achieves exceptional scalability, robustness, and efficiency across diverse scenarios. Compared to centralized CBS and decentralized Token Passing with Task Swaps (TPTS), PRISM supports significantly more agents and tasks, maintains high solution quality, and delivers faster computation times, even under constrained communication conditions. Its performance is particularly strong in narrow passage environments and low-connectivity networks, where it outperforms CBS and TPTS. Additionally, PRISM’s adaptability to varying communication protocols, such as line-of-sight (LoS), enables efficient planning with smaller agent groups, further enhancing scalability. These results highlight PRISM’s suitability for complex, dynamic environments, particularly those requiring high coordination, such as search-and-rescue scenarios.

\subsection{Experimental Setup}

For our experiments, we use environments and scenarios from a well-established MAPF benchmark database \cite{mapf-benchmarks}, which provides diverse environments with varied topologies. We conduct two experiments to evaluate PRISM's robustness and scalability, demonstrating its soundness and potential for real-world dynamic, complex settings. 

The robustness experiment tests PRISM's performance under varying network connectivity and compares it to CBS, a centralized baseline. Proximity ranges of `min,' 0.1, 0.15, 0.2, and `full' are used, where each range defines the proximity diameter as a fraction of the environment's longer dimension. For example, a range of 0.1 corresponds to a diameter that is 10\% of this dimension. The `min' range is the smallest viable diameter (4 grid cells) to avoid collisions, while the `full' range ensures a single, fully connected network across the entire system. 

The robustness experiment is conducted in the `random-64-64-20' environment, depicted in Figure \ref{fig:exp_setup}, which contains randomly scattered obstacles occupying 20\% of the space. This setup evaluates PRISM's coordination under general spatial conditions. Performance is assessed across 25 scenarios with randomly sampled motion tasks using three metrics: runtime, success rate, and cost (measured as the sum of costs). PRISM's reported runtime is the total planning time across all agents, capped at 100 minutes. If planning exceeds 100 minutes, it is reported as 100 minutes to reflect efficiency under prolonged computation. Both PRISM and CBS terminate and indicate infeasibility when no solution is found. Since scenarios are derived from valid tasks, success rates only account for cases where runtime exceeded the 100-minute cap, providing insight into PRISM's effectiveness and efficiency under challenging conditions.

The scalability experiment evaluates PRISM's performance as agent and task counts increase, compared to Token Passing with Task Swaps (TPTS) \cite{mlkk-lmapffopadt-17}. This experiment uses varying communication ranges with two protocols: proximity and line-of-sight (LoS), illustrated in Figure \ref{fig:exp_setup}. For LoS, visibility is determined by a straight-line connection between agent positions, with agents within a 4-grid-cell diameter range assumed to be visible, ensuring no immediate collisions. Performance is evaluated over 25 randomly sampled scenarios, using runtime and cost as metrics. A two-minute runtime limit is imposed, with scenarios exceeding this limit considered failures. PRISM's success rate includes only scenarios exceeding the runtime limit since it is complete, while TPTS accounts for scenarios that were ill-formed or exceeded the runtime limit.

These tests are conducted in four distinct environments: `ht\_chantry,' `maze-32-32-2,' `room-64-64-8,' and `warehouse-10-20,' chosen for their unique topological characteristics. The `ht\_chantry' environment, a video game map, includes a diverse topological elements, while the `warehouse-10-20' represents a factory layout. The `maze-32-32-2' and `room-64-64-8' environments feature narrow passages and bottlenecks, which ar traditionally challenging for MAPF problems. This variety enables a thorough evaluation of PRISM's adaptability to different settings and its performance under varying topological constraints. Additionally, plots showing the number of info packets per agent over time are included to illustrate PRISM's ability to minimize communication overhead while maintaining effective coordination. 

In the scalability experiments, tasks are assigned using a heuristic-driven allocator similar to that in TPTS. Tasks are allocated based on Manhattan distance between the agent's position and the task's start. When an agent completes a task, the allocator either assigns a new task or swaps tasks between agents if it improves the overall plan. This dynamic reassignment ensures PRISM agents remain efficient and adaptable throughout the solving process.

\subsection{Token Passing with Task Swaps}

Token Passing with Task Swaps (TPTS) is an online, decentralized, priority-based solver for the MT-MAPF problem. The team holds a token containing the current task allocation plan, paths of prior token-holding agents, and the plan cost. Only the token holder can plan or modify its path, ensuring localized changes. During its turn, the token holder plans its path while avoiding the stored paths of other agents, acting as a priority-based planner. When an agent completes its task, it requests the token, which is passed sequentially in the order requests are made.

The token holder evaluates task swaps using a heuristic based on the distance from its current position to a task's start. If no improvement is possible, it is assigned a new task and plans its path accordingly. The updated path is saved in the token, which is then passed to the next agent. This process continues until all agents have received the token and completed their tasks. For further details, see \cite{mlkk-lmapffopadt-17}.

TPTS relies on two key assumptions: motion tasks are well-formed, and all agents can communicate. A well-formed instance ensures that paths between any start or goal do not intersect other starts or goals. While this assumption is feasible in open environments like warehouses, it is harder to satisfy in constrained spaces with narrow passages, where TPTS cannot resolve deadlocks caused by stationary agents. Additionally, TPTS assumes unrestricted communication among agents, limiting its applicability to real-world scenarios where communication may be constrained.

We compare PRISM with TPTS because TPTS exemplifies an online, decentralized algorithm that avoids deadlocks under specific assumptions. In our experiments, we relax TPTS's well-formed tasks assumption to evaluate its performance in scenarios where task distributions and environmental constraints may cause deadlocks. However, we retain TPTS's unrestricted communication assumption, aligning with its design to focus on trade-offs between communication models and planning approaches. This comparison highlights the contrast between TPTS's simplicity and efficiency, relying on priority-based methods with unrestricted communication, and PRISM's robustness and flexibility, which handle constrained communication and achieve completeness even in the presence of deadlocks.

\begin{table*}[t]
\centering
\caption{Scalability: Communication Range and Maximum Number of Tasks Solved}
{
\resizebox{.8\textwidth}{!}{
\begin{tabular}{c|c|c|c|c|c|}
\cline{2-6}
 & Comm & \# of  & PRISM w/ Proximity & PRISM w/ Line-of-Sight & Token Passing w/ Task Swaps \\
 & Range & Agents & Max \# of Solved Tasks & Max \# of Solved Tasks & Max \# of Solved Tasks \\ \hline
\multicolumn{1}{|c|}{\multirow{3}{*}{ht\_chantry}} & \multirow{3}{*}{0.05} & 10 & \textbf{180} & \textbf{180} & 130 \\
\multicolumn{1}{|c|}{} &  & 20 & \textbf{125} & \textbf{125} & 65 \\
\multicolumn{1}{|c|}{} &  & 30 & 70 & \textbf{90} & 55 \\ \hline
\multicolumn{1}{|c|}{\multirow{3}{*}{maze-32-32-2}} & \multirow{3}{*}{0.15} & 10 & 165 & \textbf{215} & 85 \\
\multicolumn{1}{|c|}{} &  & 15 & 80 & \textbf{120} & 60 \\
\multicolumn{1}{|c|}{} &  & 20 & 50 & \textbf{65} & 45 \\ \hline
\multicolumn{1}{|c|}{\multirow{3}{*}{room-64-64-8}} & \multirow{3}{*}{0.10} & 10 & 265 & \textbf{325} & 265 \\
\multicolumn{1}{|c|}{} &  & 20 & 120 & \textbf{150} & 210 \\
\multicolumn{1}{|c|}{} &  & 30 & \textbf{120} & 115 & 110 \\ \hline
\multicolumn{1}{|c|}{\multirow{3}{*}{warehouse-10-20}} & \multirow{3}{*}{0.05} & 10 & 495 & 495 & \textbf{1000} \\
\multicolumn{1}{|c|}{} &  & 20 & 520 & 575 & \textbf{895 }\\
\multicolumn{1}{|c|}{} &  & 30 & 275 & \textbf{315} & 255 \\ \hline
\end{tabular}
}
}
\label{exp:scale_setup}
\end{table*}

\begin{figure}[t]
    \centering
    \includegraphics[width=.7\columnwidth]{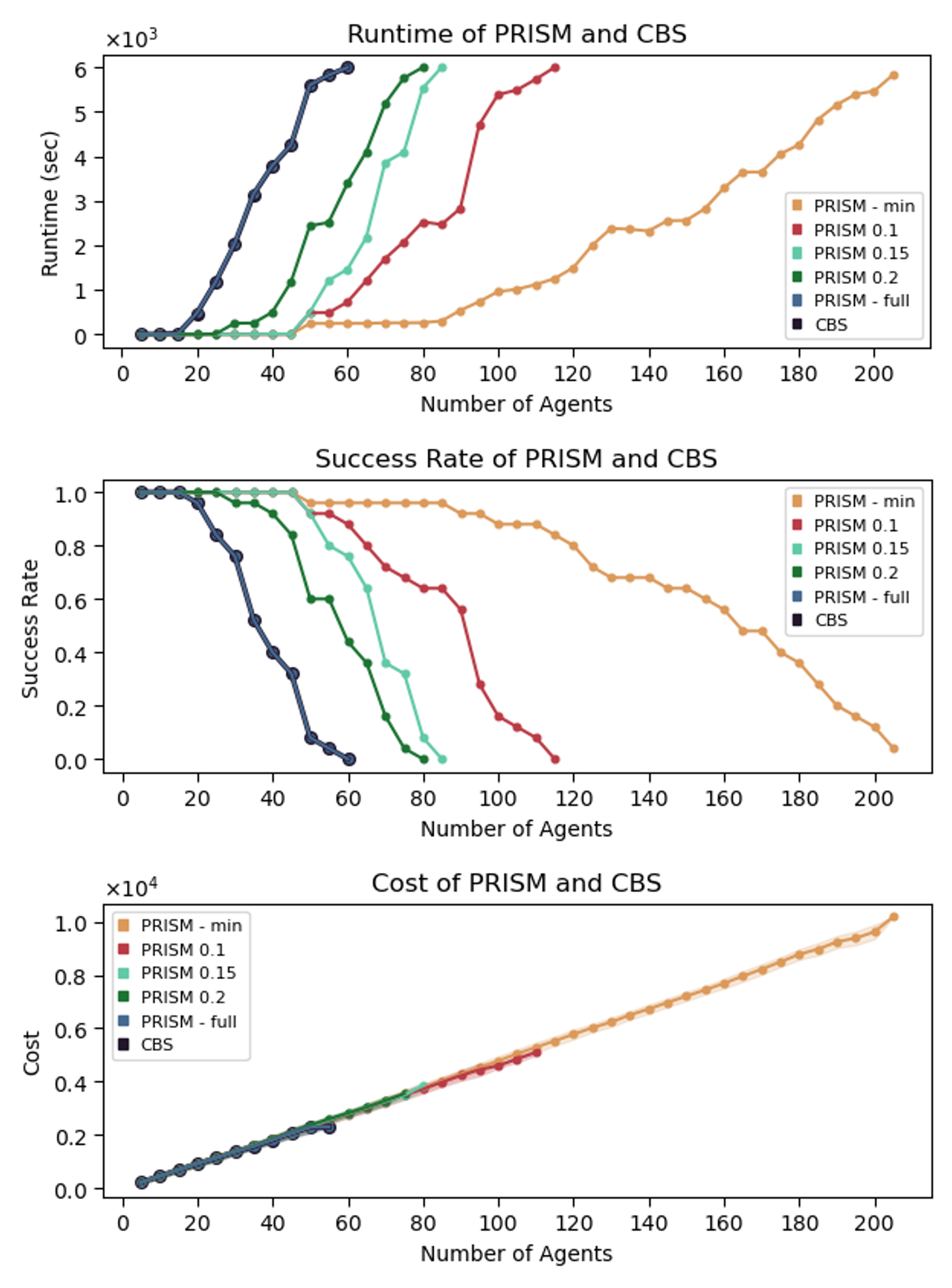}
    \caption{Robustness experiment results for CBS and PRISM with varying proximity ranges for 25 scenarios on the `random-64-64-20' environment. The thicker line shows the results for CBS and PRISM-full to emphasize that fully connected PRISM performs exactly like CBS. The runtimes are capped at 100 minutes.}
    \label{exp:robust}
\end{figure}

\subsection{Robustness Experiment Discussion}

As shown in Figure \ref{exp:robust}, our robustness experiments demonstrate PRISM's strong performance in handling varying levels of network connectivity, consistently outperforming CBS in low-connectivity networks. At `min' connectivity, PRISM solves problems with 3.4 times as many agents as CBS. As connectivity increases, PRISM's performance converges with CBS, particularly at high connectivity levels, where solution costs approach CBS's optimal costs. In low-connectivity settings, smaller agent subsets allow for more frequent but faster replanning, improving scalability compared to CBS. In contrast, high connectivity involves larger agent subsets and less frequent but slower replanning, limiting scalability. 

At `full' connectivity, PRISM achieves performance equivalent to CBS while maintaining completeness with minimal overhead. In lower connectivity networks, solution costs slightly increase but remain close to CBS's optimal costs. Info packets help mitigate degradation by guiding planning in local networks, even when communication is limited. 

The benchmarks tasks are randomly sampled to ensure uniformly distributed optimal path lengths as agent numbers increase, resulting in linear trends in team plan cost. Despite slight degradation in solution quality, PRISM consistently maintains a linear cost trend as the number of agents grows, indicating no sudden declines in performance even with higher communication frequency or larger networks. 

Lastly, while PRISM scales better than CBS, increasing the number of agents still presents challenges. As environments become congested, larger agent groups require more time to resolve. Even in low-connectivity settings, growing agent numbers increase local network sizes, adding computational complexity similar to that faced by centralized solvers.

\begin{table*}[t]
\centering
\caption{Scalability Statistics}
{\smaller
\resizebox{\textwidth}{!}{
\begin{tabular}{cccccccccccc}
 &  &  & \multicolumn{3}{c}{PRISM w/ Proximity} & \multicolumn{3}{c}{PRISM w/ Line-of-Sight} & \multicolumn{3}{c}{Token Passing with Task Swaps} \\
 & $|R|$ & \multicolumn{1}{c|}{$|T|$} & Success & Runtime & \multicolumn{1}{c|}{Cost} & Success & Runtime & \multicolumn{1}{c|}{Cost} & Success & Runtime & \multicolumn{1}{c|}{Cost} \\ \hline
\multicolumn{1}{c|}{\multirow{9}{*}{\centering \arraybackslash \begin{sideways} ht\_chantry \end{sideways}}} & \multirow{3}{*}{10} & \multicolumn{1}{c|}{30} & \textbf{100\%} & \textbf{37.9 $\pm$ 12.9} & \multicolumn{1}{c|}{3764.6 $\pm$ 316.6} & 96\% & 38.0 $\pm$ 12.9 & \multicolumn{1}{c|}{3756.0 $\pm$ 320.2} & \textbf{100\%} & 57.7 $\pm$ 15.7 & \multicolumn{1}{c|}{\textbf{3719.9 $\pm$ 305.3}} \\
\multicolumn{1}{c|}{} &  & \multicolumn{1}{c|}{60} & \textbf{88\%} & \textbf{70.3 $\pm$ 16.3} & \multicolumn{1}{c|}{7539.9 $\pm$ 446.5} & 84\% & 71.8 $\pm$ 16.1 & \multicolumn{1}{c|}{7535.9 $\pm$ 451.6} & 44\% & 83.6 $\pm$ 18.9 & \multicolumn{1}{c|}{\textbf{7335.3 $\pm$ 341.6}} \\
\multicolumn{1}{c|}{} &  & \multicolumn{1}{c|}{90} & \textbf{68\%} & 117.4 $\pm$ 50.2 & \multicolumn{1}{c|}{10766.2 $\pm$ 545.3} & 64\% & 125.8 $\pm$ 58.2 & \multicolumn{1}{c|}{10741.2 $\pm$ 549.7} & 16\% & \textbf{94.0 $\pm$ 9.7} & \multicolumn{1}{c|}{\textbf{10479.5 $\pm$ 170.8}} \\ \cline{2-12} 
\multicolumn{1}{c|}{} & \multirow{3}{*}{20} & \multicolumn{1}{c|}{30} & 76\% & \textbf{37.4 $\pm$ 11.6} & \multicolumn{1}{c|}{3405.9 $\pm$ 299.9} & 76\% & 37.5 $\pm$ 11.5 & \multicolumn{1}{c|}{3406.2 $\pm$ 300.0} & \textbf{88\%} & 69.7 $\pm$ 21.9 & \multicolumn{1}{c|}{\textbf{3373.7 $\pm$ 312.9}} \\
\multicolumn{1}{c|}{} &  & \multicolumn{1}{c|}{45} & \textbf{76\%} & 64.2 $\pm$ 28.8 & \multicolumn{1}{c|}{5436.8 $\pm$ 476.3} & \textbf{76\%} & \textbf{63.4 $\pm$ 26.4} & \multicolumn{1}{c|}{5436.8 $\pm$ 476.3} & 32\% & 99.9 $\pm$ 11.3 & \multicolumn{1}{c|}{\textbf{5146.4 $\pm$ 506.3}} \\
\multicolumn{1}{c|}{} &  & \multicolumn{1}{c|}{60} & \textbf{68\%} & 88.3 $\pm$ 36.3 & \multicolumn{1}{c|}{7144.5 $\pm$ 489.2} & \textbf{68\%} & 89.0 $\pm$ 36.4 & \multicolumn{1}{c|}{7144.8 $\pm$ 489.2} & 12\% & \textbf{84.7 $\pm$ 27.8} & \multicolumn{1}{c|}{\textbf{7069.3 $\pm$ 278.6}} \\ \cline{2-12} 
\multicolumn{1}{c|}{} & \multirow{3}{*}{30} & \multicolumn{1}{c|}{30} & 80\% & 56.8 $\pm$ 51.8 & \multicolumn{1}{c|}{\textbf{2858.2 $\pm$ 288.8}} & 80\% & 55.6 $\pm$ 51.3 & \multicolumn{1}{c|}{2858.3 $\pm$ 288.8} & \textbf{100\%} & \textbf{27.0 $\pm$ 8.8} & \multicolumn{1}{c|}{2950.2 $\pm$ 289.1} \\
\multicolumn{1}{c|}{} &  & \multicolumn{1}{c|}{40} & 64\% & 69.8 $\pm$ 26.5 & \multicolumn{1}{c|}{4336.3 $\pm$ 422.8} & 64\% & \textbf{69.5 $\pm$ 25.3} & \multicolumn{1}{c|}{4336.4 $\pm$ 422.6} & \textbf{76\%} & 81.3 $\pm$ 18.8 & \multicolumn{1}{c|}{\textbf{4317.8 $\pm$ 410.4}} \\
\multicolumn{1}{c|}{} &  & \multicolumn{1}{c|}{50} & \textbf{60\%} & 90.7 $\pm$ 29.7 & \multicolumn{1}{c|}{5688.6 $\pm$ 364.2} & \textbf{60\%} & \textbf{89.7 $\pm$ 28.8} & \multicolumn{1}{c|}{5691.3 $\pm$ 362.1} & 8\% & 91.7 $\pm$ 23.1 & \multicolumn{1}{c|}{\textbf{5072.5 $\pm$ 89.8}} \\ \hline
\multicolumn{1}{c|}{\multirow{9}{*}{\centering \arraybackslash \begin{sideways} maze-32-32-2 \end{sideways}}} & \multirow{3}{*}{10} & \multicolumn{1}{c|}{30} & 92\% & \textbf{24.1 $\pm$ 15.0} & \multicolumn{1}{c|}{2319.8 $\pm$ 187.6} & \textbf{96\%} & 25.8 $\pm$ 36.0 & \multicolumn{1}{c|}{2301.0 $\pm$ 198.2} & 60\% & 12.4 $\pm$ 5.8 & \multicolumn{1}{c|}{\textbf{2290.4 $\pm$ 180.1}} \\
\multicolumn{1}{c|}{} &  & \multicolumn{1}{c|}{50} & 88\% & 40.4 $\pm$ 25.8 & \multicolumn{1}{c|}{3804.1 $\pm$ 248.0} & \textbf{96\%} & 34.1 $\pm$ 23.5 & \multicolumn{1}{c|}{\textbf{3788.8 $\pm$ 233.3}} & 32\% & \textbf{19.7 $\pm$ 6.9} & \multicolumn{1}{c|}{4032.2 $\pm$ 242.5} \\
\multicolumn{1}{c|}{} &  & \multicolumn{1}{c|}{70} & \textbf{88\%} & 70.9 $\pm$ 51.0 & \multicolumn{1}{c|}{\textbf{5071.6 $\pm$ 307.9}} & \textbf{88\%} & 49.3 $\pm$ 30.2 & \multicolumn{1}{c|}{5099.1 $\pm$ 303.8} & 16\% & \textbf{24.6 $\pm$ 11.5} & \multicolumn{1}{c|}{5310.2 $\pm$ 477.4} \\ \cline{2-12} 
\multicolumn{1}{c|}{} & \multirow{3}{*}{15} & \multicolumn{1}{c|}{20} & 88\% & 35.6 $\pm$ 31.6 & \multicolumn{1}{c|}{1339.8 $\pm$ 147.7} & \textbf{96\%} & 27.8 $\pm$ 26.0 & \multicolumn{1}{c|}{\textbf{1330.4 $\pm$ 141.8}} & 76\% & \textbf{8.2 $\pm$ 3.9} & \multicolumn{1}{c|}{1395.1 $\pm$ 127.3} \\
\multicolumn{1}{c|}{} &  & \multicolumn{1}{c|}{30} & 80\% & 44.8 $\pm$ 29.5 & \multicolumn{1}{c|}{2248.4 $\pm$ 228.7} & \textbf{88\%} & 42.4 $\pm$ 43.2 & \multicolumn{1}{c|}{2207.0 $\pm$ 181.7} & 32\% & \textbf{16.7 $\pm$ 7.2} & \multicolumn{1}{c|}{\textbf{2202.9 $\pm$ 252.7}} \\
\multicolumn{1}{c|}{} &  & \multicolumn{1}{c|}{40} & 72\% & 78.8 $\pm$ 61.0 & \multicolumn{1}{c|}{3055.5 $\pm$ 161.4} & \textbf{84\%} & 68.9 $\pm$ 65.6 & \multicolumn{1}{c|}{\textbf{3015.1 $\pm$ 171.6}} & 16\% & \textbf{25.2 $\pm$ 6.0} & \multicolumn{1}{c|}{3134.2 $\pm$ 179.9} \\ \cline{2-12} 
\multicolumn{1}{c|}{} & \multirow{3}{*}{20} & \multicolumn{1}{c|}{20} & 84\% & 47.5 $\pm$ 51.4 & \multicolumn{1}{c|}{1145.5 $\pm$ 117.0} & \textbf{88\%} & 36.4 $\pm$ 48.3 & \multicolumn{1}{c|}{\textbf{1145.4 $\pm$ 110.1}} & 80\% & \textbf{3.2 $\pm$ 0.8} & \multicolumn{1}{c|}{1228.0 $\pm$ 130.7} \\
\multicolumn{1}{c|}{} &  & \multicolumn{1}{c|}{30} & 44\% & 59.4 $\pm$ 29.5 & \multicolumn{1}{c|}{\textbf{2095.3 $\pm$ 224.7}} & \textbf{60\%} & 47.3 $\pm$ 48.3 & \multicolumn{1}{c|}{2114.5 $\pm$ 205.5} & 32\% & \textbf{19.7 $\pm$ 10.3} & \multicolumn{1}{c|}{2179.0 $\pm$ 280.7} \\
\multicolumn{1}{c|}{} &  & \multicolumn{1}{c|}{40} & 12\% & 90.8 $\pm$ 35.0 & \multicolumn{1}{c|}{\textbf{2886.0 $\pm$ 341.3}} & \textbf{44\%} & 95.1 $\pm$ 54.1 & \multicolumn{1}{c|}{2942.3 $\pm$ 204.2} & 12\% & \textbf{26.2 $\pm$ 12.2} & \multicolumn{1}{c|}{2921.0 $\pm$ 369.5} \\ \hline
\multicolumn{1}{c|}{\multirow{9}{*}{\centering \arraybackslash \begin{sideways} room-64-64-8 \end{sideways}}} & \multirow{3}{*}{10} & \multicolumn{1}{c|}{30} & 96\% & 20.9 $\pm$ 15.6 & \multicolumn{1}{c|}{\textbf{2434.4 $\pm$ 180.3}} & \textbf{96\%} & 21.4 $\pm$ 16.4 & \multicolumn{1}{c|}{2435.0 $\pm$ 180.6} & 92\% & \textbf{15.9 $\pm$ 9.6} & \multicolumn{1}{c|}{2438.6 $\pm$ 214.5} \\
\multicolumn{1}{c|}{} &  & \multicolumn{1}{c|}{60} & 96\% & 38.7 $\pm$ 39.3 & \multicolumn{1}{c|}{\textbf{4759.0 $\pm$ 319.8}} & \textbf{96\%} & 34.4 $\pm$ 19.9 & \multicolumn{1}{c|}{4762.7 $\pm$ 322.2} & 64\% & \textbf{24.2 $\pm$ 7.5} & \multicolumn{1}{c|}{4771.8 $\pm$ 319.7} \\
\multicolumn{1}{c|}{} &  & \multicolumn{1}{c|}{90} & 88\% & 69.7 $\pm$ 61.7 & \multicolumn{1}{c|}{\textbf{6941.2 $\pm$ 368.2}} & \textbf{92\%} & 64.1 $\pm$ 50.3 & \multicolumn{1}{c|}{6956.2 $\pm$ 361.8} & 36\% & \textbf{24.8 $\pm$ 5.3} & \multicolumn{1}{c|}{6895.3 $\pm$ 346.8} \\ \cline{2-12} 
\multicolumn{1}{c|}{} & \multirow{3}{*}{20} & \multicolumn{1}{c|}{30} & 96\% & 50.6 $\pm$ 69.5 & \multicolumn{1}{c|}{2234.6 $\pm$ 162.7} & \textbf{96\%} & 26.2 $\pm$ 20.0 & \multicolumn{1}{c|}{2235.6 $\pm$ 162.5} & 84\% & \textbf{21.7 $\pm$ 12.9} & \multicolumn{1}{c|}{\textbf{2193.9 $\pm$ 140.3}} \\
\multicolumn{1}{c|}{} &  & \multicolumn{1}{c|}{45} & \textbf{92\%} & 56.7 $\pm$ 51.8 & \multicolumn{1}{c|}{3520.6 $\pm$ 266.5} & 88\% & 44.9 $\pm$ 31.7 & \multicolumn{1}{c|}{3521.4 $\pm$ 271.4} & 68\% & \textbf{42.0 $\pm$ 25.5} & \multicolumn{1}{c|}{\textbf{3466.9 $\pm$ 219.4}} \\
\multicolumn{1}{c|}{} &  & \multicolumn{1}{c|}{60} & 72\% & 75.5 $\pm$ 40.6 & \multicolumn{1}{c|}{\textbf{4639.9 $\pm$ 253.2}} & \textbf{76\%} & 55.3 $\pm$ 28.7 & \multicolumn{1}{c|}{4644.2 $\pm$ 238.8} & 52\% & \textbf{40.0 $\pm$ 21.2} & \multicolumn{1}{c|}{4645.0 $\pm$ 322.4} \\ \cline{2-12} 
\multicolumn{1}{c|}{} & \multirow{3}{*}{30} & \multicolumn{1}{c|}{30} & 64\% & 58.4 $\pm$ 64.7 & \multicolumn{1}{c|}{1913.1 $\pm$ 157.9} & 72\% & 53.4 $\pm$ 69.6 & \multicolumn{1}{c|}{\textbf{1898.3 $\pm$ 155.1}} & \textbf{92\%} & \textbf{6.4 $\pm$ 1.6} & \multicolumn{1}{c|}{1913.8 $\pm$ 144.1} \\
\multicolumn{1}{c|}{} &  & \multicolumn{1}{c|}{40} & 52\% & 67.4 $\pm$ 57.5 & \multicolumn{1}{c|}{2925.0 $\pm$ 225.0} & 64\% & 56.7 $\pm$ 52.5 & \multicolumn{1}{c|}{2889.8 $\pm$ 219.1} & \textbf{76\%} & \textbf{37.1 $\pm$ 24.1} & \multicolumn{1}{c|}{\textbf{2846.5 $\pm$ 190.6}} \\
\multicolumn{1}{c|}{} &  & \multicolumn{1}{c|}{50} & 32\% & 85.3 $\pm$ 55.0 & \multicolumn{1}{c|}{\textbf{3647.8 $\pm$ 317.3}} & 44\% & 102.8 $\pm$ 68.5 & \multicolumn{1}{c|}{3711.0 $\pm$ 310.9} & \textbf{52\%} & \textbf{45.3 $\pm$ 14.8} & \multicolumn{1}{c|}{3654.5 $\pm$ 239.0} \\ \hline
\multicolumn{1}{c|}{\multirow{9}{*}{\centering \arraybackslash \begin{sideways} warehouse-10-20 \end{sideways}}} & \multirow{3}{*}{10} & \multicolumn{1}{c|}{80} & 92\% & 0.6 $\pm$ 0.3 & \multicolumn{1}{c|}{9548.2 $\pm$ 436.5} & 92\% & \textbf{0.5 $\pm$ 0.3} & \multicolumn{1}{c|}{9548.2 $\pm$ 436.4} & \textbf{100\%} & 8.6 $\pm$ 11.0 & \multicolumn{1}{c|}{\textbf{9504.7 $\pm$ 434.1}} \\
\multicolumn{1}{c|}{} &  & \multicolumn{1}{c|}{160} & 92\% & \textbf{0.9 $\pm$ 0.3} & \multicolumn{1}{c|}{16764.4 $\pm$ 390.4} & 92\% & \textbf{0.9 $\pm$ 0.3} & \multicolumn{1}{c|}{\textbf{16764.7 $\pm$ 390.2}} & \textbf{96\%} & 7.1 $\pm$ 7.4 & \multicolumn{1}{c|}{16810.5 $\pm$ 450.8} \\
\multicolumn{1}{c|}{} &  & \multicolumn{1}{c|}{240} & 92\% & 1.4 $\pm$ 0.6 & \multicolumn{1}{c|}{\textbf{24410.2 $\pm$ 480.7}} & 92\% & \textbf{1.3 $\pm$ 0.3} & \multicolumn{1}{c|}{24411.6 $\pm$ 481.7} & \textbf{96\%} & 11.6 $\pm$ 11.3 & \multicolumn{1}{c|}{24572.7 $\pm$ 490.5} \\ \cline{2-12} 
\multicolumn{1}{c|}{} & \multirow{3}{*}{20} & \multicolumn{1}{c|}{80} & 92\% & \textbf{0.9 $\pm$ 0.3} & \multicolumn{1}{c|}{9176.7 $\pm$ 373.5} & \textbf{96\%} & \textbf{0.9 $\pm$ 0.3} & \multicolumn{1}{c|}{\textbf{9169.8 $\pm$ 367.0}} & 80\% & 16.5 $\pm$ 24.0 & \multicolumn{1}{c|}{9182.8 $\pm$ 452.5} \\
\multicolumn{1}{c|}{} &  & \multicolumn{1}{c|}{160} & 92\% & 1.6 $\pm$ 0.5 & \multicolumn{1}{c|}{16568.5 $\pm$ 403.6} & \textbf{96\%} & \textbf{1.5 $\pm$ 0.4} & \multicolumn{1}{c|}{\textbf{16559.7 $\pm$ 396.1}} & 72\% & 17.6 $\pm$ 14.4 & \multicolumn{1}{c|}{16578.7 $\pm$ 421.1} \\
\multicolumn{1}{c|}{} &  & \multicolumn{1}{c|}{240} & 80\% & 2.2 $\pm$ 0.5 & \multicolumn{1}{c|}{24176.3 $\pm$ 485.1} & \textbf{84\%} & \textbf{2.1 $\pm$ 0.5} & \multicolumn{1}{c|}{\textbf{24174.9 $\pm$ 477.2}} & 60\% & 28.2 $\pm$ 14.7 & \multicolumn{1}{c|}{24511.3 $\pm$ 490.3} \\ \cline{2-12} 
\multicolumn{1}{c|}{} & \multirow{3}{*}{30} & \multicolumn{1}{c|}{60} & \textbf{88\%} & \textbf{1.3 $\pm$ 0.5} & \multicolumn{1}{c|}{7061.4 $\pm$ 356.8} & \textbf{88\%} & 1.4 $\pm$ 0.6 & \multicolumn{1}{c|}{7062.8 $\pm$ 356.9} & 72\% & 22.3 $\pm$ 21.7 & \multicolumn{1}{c|}{\textbf{6936.4 $\pm$ 303.6}} \\
\multicolumn{1}{c|}{} &  & \multicolumn{1}{c|}{120} & \textbf{84\%} & 2.0 $\pm$ 0.5 & \multicolumn{1}{c|}{\textbf{12899.9 $\pm$ 397.2}} & \textbf{84\%} & \textbf{1.9 $\pm$ 0.4} & \multicolumn{1}{c|}{12900.7 $\pm$ 391.4} & 32\% & 45.5 $\pm$ 39.1 & \multicolumn{1}{c|}{12924.0 $\pm$ 374.8} \\
\multicolumn{1}{c|}{} &  & \multicolumn{1}{c|}{180} & 72\% & 8.9 $\pm$ 26.0 & \multicolumn{1}{c|}{18881.6 $\pm$ 457.7} & \textbf{76\%} & \textbf{2.6 $\pm$ 0.4} & \multicolumn{1}{c|}{18917.8 $\pm$ 470.8} & 24\% & 27.3 $\pm$ 15.9 & \multicolumn{1}{c|}{\textbf{18821.3 $\pm$ 313.6}} \\ \hline
\end{tabular}
}
}
\label{exp:scale}
\end{table*}

\subsection{Scalability Experiment Discussion}

\subsubsection{General Performance} 

The scalability experiment results, summarized in Tables \ref{exp:scale_setup} and \ref{exp:scale}, compare PRISM's proximity (PRISM Prox) and line-of-sight (PRISM LoS) protocols against TPTS across various environments. Table \ref{exp:scale_setup} outlines the communication ranges for each PRISM protocol and the maximum number of tasks (Max $|T|$) solved for different team sizes ($|R|$). These ranges were selected to highlight performance differences between the protocols, rather than to optimize PRISM's performance. Table \ref{exp:scale} compares PRISM and TPTS on success rate (SR), average runtime and cost (with standard deviations) for varying team and task sizes. The best results for each configuration are bolded. However, it is important to note that as success rates drop for larger, more challenging scenarios, the average runtime and cost values may skew toward easier scenarios. 

PRISM effectively scales with increasing tasks, despite longer planning times caused by additional iterations. Its sequential task allocation ensures agents focus on one task at a time, maintaining efficiency even as task specifications evolve. 

The scalability experiments also demonstrate PRISM's adaptability to different topologies. In open environments, like `ht\_chantry' and `warehouse-10-20,' performance variability increases with more agents, while in constrained environments like `room-64-64-8' and `maze-32-32-2,' PRISM excels by handling narrow passages and bottlenecks effectively. By limiting communication, PRISM focuses on smaller subsets of agents, reducing conflicts and ensuring manageable subproblem sizes during planning iterations. 

In these constrained settings, runtime grows linearly with the number of agents, showcasing PRISM's scalability. This linear growth, rather than exponential escalation, highlights PRISM's ability to efficiently handle increasing complexity by leveraging smaller subproblems facilitated by communication constraints. Overall, PRISM demonstrates robust and scalable performance across diverse environmental conditions. 

\subsubsection{Communication Protocol Comparison}

PRISM demonstrates strong performance under constrained communication conditions, particularly with the line-of-sight (LoS) protocol. LoS forms smaller, more manageable agent groups during planning, enabling PRISM to efficiently handle more tasks and agents in narrow passage environments, where fewer agents require simultaneous coordination. As a result, PRISM with LoS achieves better scalability in these settings compared to proximity-based communication. 

In open spaces, the performance of both protocols is similar in terms of success rates and runtime. However, in narrow environments, PRISM with LoS shows a slight advantage, benefiting from smaller agent groupings. Despite reduced network connectivity under LoS, no significant differences in solution quality are observed between the two protocols. This consistency highlights PRISM's robustness and adaptability, making it well-suited for scenarios with limited direct communication. 

\begin{figure*}[t!]
    \centering
    \begin{subfigure}[t]{\textwidth}
        \centering
        \includegraphics[width=\textwidth]{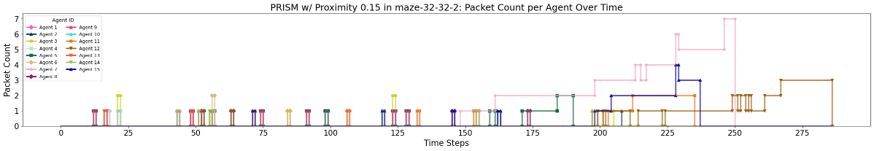}
        \caption{}
        \label{fig:maze_packets-prox}
    \end{subfigure}
    ~ 
    \begin{subfigure}[t]{\textwidth}
        \centering
        \includegraphics[width=\textwidth]{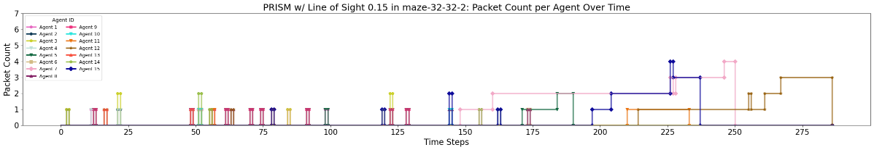}
        \caption{}
        \label{fig:room_packet-los}
    \end{subfigure}
    ~ 
    \begin{subfigure}[t]{\textwidth}
        \centering
        \includegraphics[width=\textwidth]{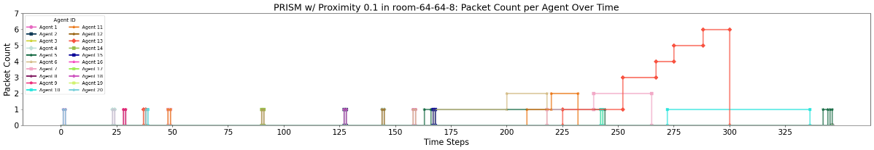}
        \caption{}
        \label{fig:room_packet-prox}
    \end{subfigure}
    ~ 
    \begin{subfigure}[t]{\textwidth}
        \centering
        \includegraphics[width=\textwidth]{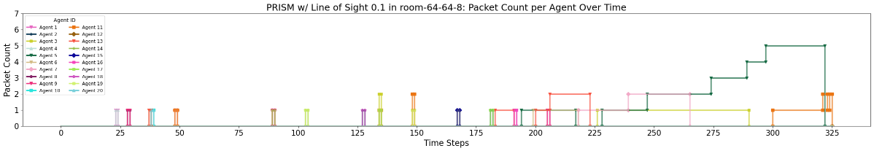}
        \caption{}
        \label{fig:room_packet-los}
    \end{subfigure}
    \caption{This figure depicts the packet count per agent over time, with each agent represented by a unique color. Markers are included to highlight fluctuations in packet counts for improved visibility. Results are shown for the `maze-32-32' map with 15 agents solving 30 tasks using (a) a 0.15 proximity range and (b) a 0.15 line-of-sight (LoS) range. Similarly, the `room-64-64-8' map is evaluated with 20 agents completing 60 tasks, using (c) a 0.10 proximity range and (d) a 0.10 LoS range.}
    \label{exp:packets}
\end{figure*}

\subsubsection{PRISM vs. TPTS}

When comparing PRISM to TPTS, PRISM exhibits more graceful performance degradation as the number of tasks and agents increases. TPTS relies on well-formed problems, and when this condition is unmet, it cannot terminate on its own and requires external intervention. In contrast, PRISM handles all problem types effectively, ensuring reliable performance across diverse scenarios. However, when TPTS can solve a problem, it is typically faster due to its low-coordination, priority-based planning scheme.

TPTS performs well in environments like `ht\_chantry' and `warehouse-10-20,' where tasks are more likely to be well-formed. In such settings, its scalability and faster planning times outperform PRISM. Conversely, PRISM excels in narrow passage environments that require higher coordination, consistently providing solutions even in challenging scenarios. For example, in a maze environment, PRISM scales to 2.5 times as many tasks as TPTS for a 10-agent team, making it ideal for complex scenarios where robust coordination is essential.

Solution quality also varies between the two approaches. In open environments, TPTS often produces lower-cost solutions by leveraging global token-based information. In contrast, PRISM, which relies on local information, yields slightly higher costs in these settings. However, in narrow environments, PRISM outperforms TPTS by achieving lower solution costs due to its superior coordination, while TPTS's reduced coordination often leads to inefficient routes or unnecessary delays.

PRISM’s performance is influenced by network connectivity. By limiting path resolution to nearby agents within communication range, PRISM supports rapid replanning and maintains efficiency. Its integration of constraint-based search and info packets ensures high-quality paths in a decentralized framework. While performance bottlenecks still occur as the number of agents grows, these are less pronounced compared to centralized methods. In open environments, however, simpler decentralized approaches like TPTS may offer better scalability and faster runtimes, indicating that PRISM is best suited for scenarios requiring high coordination, whereas TPTS is better for less complex settings.

For applications such as search-and-rescue in cluttered or constrained environments, PRISM’s ability to handle high levels of coordination makes it the preferred choice. For traditional warehouse tasks like pickup-and-delivery, where tasks are well-formed and space is ample, TPTS is likely to outperform PRISM in both scalability and efficiency.

\subsubsection{Info Packet Counts}
In addition to runtime and cost results, we present examples of how frequently agents utilize info packets during planning, shown in Figures \ref{exp:packets}. These plots illustrate the packet count per agent over time for PRISM using line-of-sight and proximity communication protocols. The data shows that PRISM relies minimally on info packets, with most agents retaining only one or two packets briefly. Towards the end of planning, packet counts increase as agents complete their tasks and transition to a resting state. This leads to the accumulation of infinite info packets, which persist until agents finish their tasks. Once all tasks are completed, this accumulation is cleared, as reflected in the sharp drop to zero in the plots.

In the `maze-32-32-2' environment, agents rely more heavily on info packets than in `room-64-64-8,' reflecting the maze's higher coordination demands. The maze’s narrow passages require frequent use of info packets for local planning due to local networks that change more frequently. 

Between the two communication protocols, line-of-sight shows more frequent info packet use early in planning but fewer infinite flush time packets toward the end. This is due to the limited communication range of line-of-sight, which causes more frequent network changes. However, as tasks are completed, the constrained range prevents significant packet accumulation. Overall, PRISM effectively leverages info packets to enhance planning without agents accumulating or retaining them for long durations.

\begin{figure*}[t!]
    \centering
    \begin{subfigure}[t]{0.3\textwidth}
        \centering
        \includegraphics[width=\textwidth]{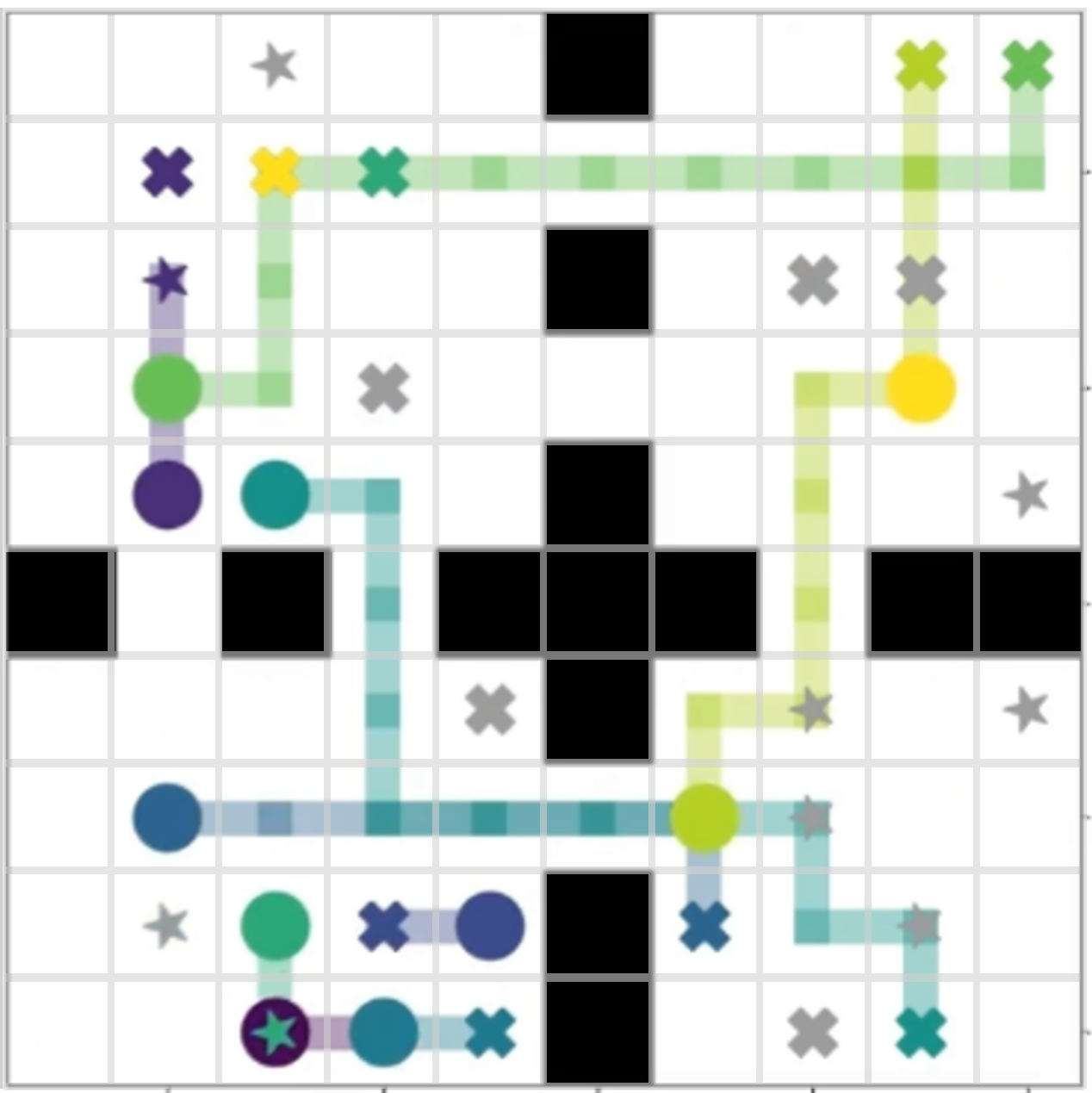}
        \caption{Simulation}
        \label{fig:simulation-exp}
    \end{subfigure}%
    ~ 
    \begin{subfigure}[t]{0.3\textwidth}
        \centering
        \includegraphics[width=\textwidth]{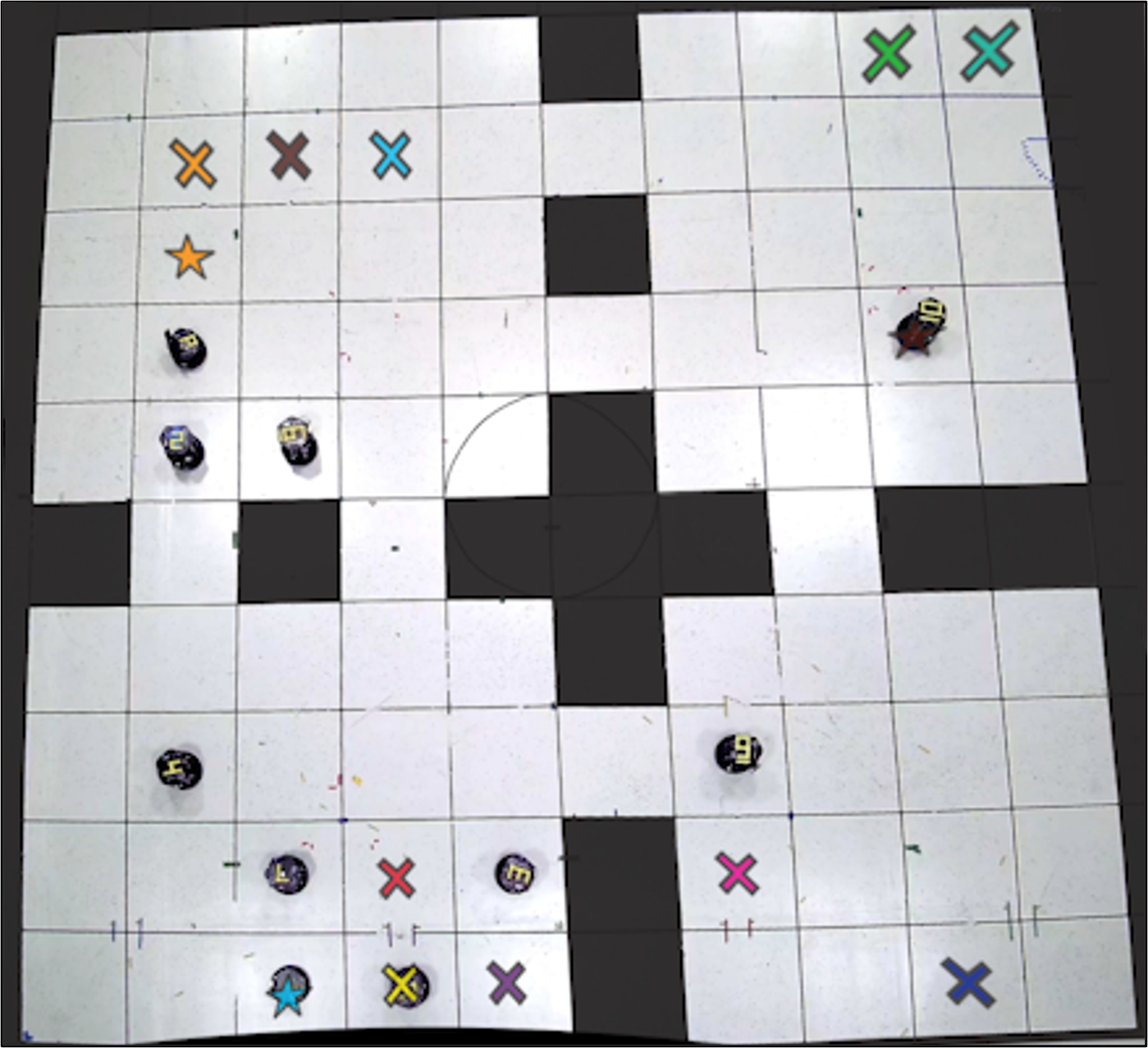}
        \caption{Physical}
        \label{fig:physical-exp}
    \end{subfigure}
    \caption{(a) Simulation of 10 agents (circles) in a small room environment following their planned paths to task start locations (stars) and then to task goal locations (X's). (b) A bird's-eye view from the physical experiment using TurtleBot3 robots, with task start (stars) and goal locations (X's) overlaid on the image for direct comparison..}
    \label{exp:phys_exp}
\end{figure*}

\subsection{Physical System Validation}

To validate PRISM in a real-world setting, we conducted physical experiments in an environment designed to mimic interconnected rooms with narrow bottlenecks, creating challenging navigation scenarios. A shot of our physical experiment and its simulation are shown in Figure \ref{exp:phys_exp}. A team of 10 TurtleBot2 robots was deployed to collectively complete a set of 20 predefined tasks. The environment was structured to test PRISM's ability to manage congestion and deadlocks effectively, particularly in the bottleneck regions where agent coordination is critical. 

The experiments were implemented using the Robot Operating System (ROS), with each TurtleBot running on a Raspberry Pi for onboard computation. A motion capture system was used to provide localization data, enabling accurate real-time feedback for robot control. For navigation, we employed a point stabilization controller to guide the robots efficiently between task locations. These experiments showcased PRISM's ability to coordinate multiple agents in a constrained physical environment, reinforcing its practicality and robustness under real-world conditions.

\section{Conclusion} \label{Conclusion}

In this work, we presented PRISM, a decentralized multi-agent pathfinding framework that combines constraint-based search with selective communication via info packets. By restricting path resolution to interactions with relevant nearby agents, determined by their communication range, PRISM enables efficient and rapid replanning. This approach allows PRISM to generate high-quality paths even in dynamic and complex environments. Notably, we proved that PRISM is both complete and guarantees deadlock avoidance, ensuring reliable operation across a variety of scenarios.

PRISM also demonstrates significant strengths in environments requiring high levels of coordination, such as those characterized by narrow passages or dense task allocations, consistently delivering robust solutions despite increased complexity. In contrast, simpler decentralized methods like TPTS, with lower computational overhead, excel in spacious environments with fewer coordination demands, offering better scalability and faster runtimes. This contrast highlights PRISM's adaptability and reliability in challenging scenarios while acknowledging the advantages of TPTS in less demanding settings. Although PRISM experiences performance bottlenecks as the number of agents increases, these are far less pronounced compared to centralized solvers, emphasizing its ability to scale effectively. 

PRISM’s flexibility as a framework makes it well-suited for diverse applications. For instance, while this work assumes agents rest at their goal positions upon task completion to avoid resource conflicts, PRISM can be adapted to different domains by incorporating predefined parking positions or dynamically sampled resting locations. For example, in warehouse environments, designated parking areas may be required, whereas search-and-rescue scenarios might benefit from dynamic repositioning to address task conflicts. This adaptability highlights PRISM’s potential to tackle a broad spectrum of multi-agent planning challenges across various domains.

Ultimately, PRISM represents a robust framework for adapting constraint-based search algorithms to decentralized systems. Its scalability, adaptability, and reliability make it a strong foundation for solving complex multi-agent planning problems in both structured and dynamic environments.

\vskip 0.2in
\bibliography{references}
\bibliographystyle{theapa}

\end{document}